\newif\ifarxiv
\begin{document}

%%% The following commands remove the headers in your paper. For final 
%%% papers, these will be inserted during the pagination process.

\pagestyle{fancy}
\fancyhead{}

%%% The next command prints the information defined in the preamble.

%%%%%%%%%%%%%%%%%%%%%%%%%%%%%%%%%%%%%%%%%%%%%%%%%%%%%%%%%%%%%%%%%%%%%%%%

%%% == IMPORTANT ==
%%% Use this command to specify your submission number.
%%% In anonymous mode, it will be printed on the first page.

\acmSubmissionID{973}

%%% Use this command to specify the title of your paper.

\title[Hierarchical Imitation Learning of Team Behavior from Heterogeneous Demonstrations]{Hierarchical Imitation Learning of Team Behavior from Heterogeneous Demonstrations}

%%% Provide names, affiliations, and email addresses for all authors.

\author{Sangwon Seo}
\affiliation{
  \institution{Rice University}
  \city{Houston, TX}
  \country{USA}}
\email{sangwon.seo@rice.edu}

\author{Vaibhav Unhelkar}
\affiliation{
  \institution{Rice University}
  \city{Houston, TX}
  \country{USA}}
\email{vaibhav.unhelkar@rice.edu}

%%% Use this environment to specify a short abstract for your paper.

\begin{abstract}
Successful collaboration requires team members to stay aligned, especially in complex sequential tasks. Team members must dynamically coordinate which subtasks to perform and in what order. However, real-world constraints like partial observability and limited communication bandwidth often lead to suboptimal collaboration. Even among expert teams, the same task can be executed in multiple ways. To develop multi-agent systems and human-AI teams for such tasks, we are interested in data-driven learning of multimodal team behaviors. Multi-Agent Imitation Learning (MAIL) provides a promising framework for data-driven learning of team behavior from demonstrations, but existing methods struggle with heterogeneous demonstrations, as they assume that all demonstrations originate from a single team policy.
Hence, in this work, we introduce \ouralg: a hierarchical MAIL algorithm designed to learn multimodal team behaviors in complex sequential tasks. \ouralg represents each team member with a hierarchical policy and learns these policies from heterogeneous team demonstrations in a factored manner. By employing a distribution-matching approach, \ouralg mitigates compounding errors and scales effectively to long horizons and continuous state representations. Experimental results show that \ouralg outperforms MAIL baselines and accurately models team behavior across a variety of collaborative scenarios.
\end{abstract}

%%% The code below was generated by the tool at http://dl.acm.org/ccs.cfm.
%%% Please replace this example with code appropriate for your own paper.
\begin{CCSXML}
<ccs2012>
   <concept>
       <concept_id>10010147.10010257.10010293.10010319</concept_id>
       <concept_desc>Computing methodologies~Learning latent representations</concept_desc>
       <concept_significance>300</concept_significance>
       </concept>
   <concept>
       <concept_id>10010147.10010257.10010282.10010290</concept_id>
       <concept_desc>Computing methodologies~Learning from demonstrations</concept_desc>
       <concept_significance>500</concept_significance>
       </concept>
   <concept>
       <concept_id>10010147.10010178.10010219.10010220</concept_id>
       <concept_desc>Computing methodologies~Multi-agent systems</concept_desc>
       <concept_significance>300</concept_significance>
       </concept>
   <concept>
       <concept_id>10010147.10010257.10010258.10010261.10010274</concept_id>
       <concept_desc>Computing methodologies~Apprenticeship learning</concept_desc>
       <concept_significance>500</concept_significance>
       </concept>
   <concept>
       <concept_id>10010147.10010257.10010282.10011305</concept_id>
       <concept_desc>Computing methodologies~Semi-supervised learning settings</concept_desc>
       <concept_significance>300</concept_significance>
       </concept>
 </ccs2012>
\end{CCSXML}

\ccsdesc[300]{Computing methodologies~Learning latent representations}
\ccsdesc[500]{Computing methodologies~Learning from demonstrations}
\ccsdesc[300]{Computing methodologies~Multi-agent systems}
\ccsdesc[500]{Computing methodologies~Apprenticeship learning}
\ccsdesc[300]{Computing methodologies~Semi-supervised learning settings}

%%% Use this command to specify a few keywords describing your work.
%%% Keywords should be separated by commas.
\keywords{Multi-Agent Imitation Learning, Teamwork, Behavior Modeling} 
\maketitle

\section{Introduction}
\label{sec. intro}
Imitation learning (IL) is a paradigm for training agent behaviors using demonstrations \cite{abbeel2004apprenticeship}. IL typically assumes that the demonstrations are generated by an expert following a single, optimal policy. Under this assumption, IL algorithms learn an estimate of the expert's policy. Compared to reinforcement learning (RL), another popular framework for training agents, IL offers a key advantage in many practical applications: it does not require hand-engineered rewards. Designing rewards often requires extensive domain expertise, and in many cases, informative rewards can be challenging for end-users to engineer. Moreover, even with hand-engineered rewards, the agent can learn suboptimal or reward hacking behaviors~\cite{amodei2016concrete, sutton2018reinforcement}. In contrast, often end-users can more easily provide demonstrations of desirable agent behavior~\cite{osa2018algorithmic, arora2021survey, chernova2022robot}.

Despite this advantage, traditional IL algorithms face challenges while learning complex behaviors from demonstration collected by human end-users. A key challenge is that conventional IL methods consider a \textit{single agent} that has \textit{full observability} of the environment. In reality, human end-users often perform complex tasks as teams rather than individually. As a result, demonstrations available for learning often involve multiple agents interacting with one another and their environment. Since the dynamics of the environment depend on these interactions, specialized IL algorithms are needed to learn multi-agent behaviors.

\begin{figure*}[t]
\newcommand\gap{0.25}
\newcommand\gapd{0.48}
  \centering
  \begin{subfigure}[b]{\gapd\linewidth}
      \centering
      \includegraphics[width=0.49\textwidth]{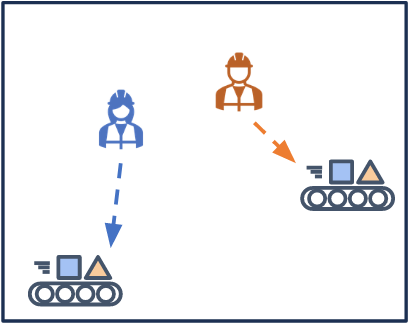}
      \hfill
      \includegraphics[width=0.49\textwidth]{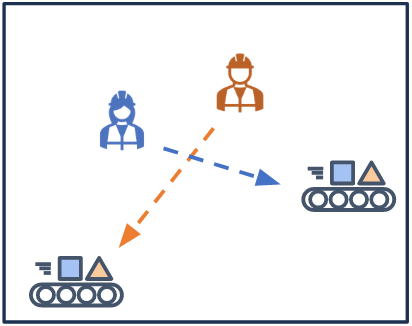}
      \caption{Multiple Near-Optimal Strategies}
      \label{fig: workplace 1}
  \end{subfigure}
  \hfill  
  \begin{subfigure}[b]{\gap\linewidth}
      \centering
      \includegraphics[width=0.95\textwidth]{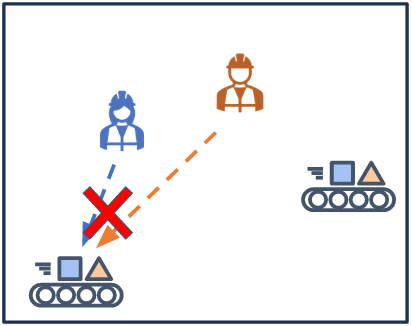}
      \caption{Suboptimal Teamwork}
      \label{fig: workplace suboptimal}
  \end{subfigure}
  \hfill  
  \begin{subfigure}[b]{\gap\linewidth}
      \centering
      \includegraphics[width=0.95\textwidth]{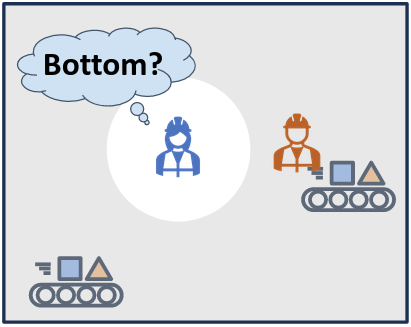}
      \caption{Partial Observability}
      \label{fig: workplace partially observe}
  \end{subfigure}

  \captionsetup{subrefformat=parens}
  \caption{Motivating Example: Consider a team whose members must coordinate on the fly to complete subtasks at two conveyor belts. Each member has limited observability, perceiving only their immediate surroundings. For example, the unshaded area for the blue person in \subref{fig: workplace partially observe}. As shown in \subref{fig: workplace 1}, this task allows multiple near-optimal strategies, enabling teams to execute it in different ways based on their shared preferences. However, practical constraints -- such as partial observability -- can lead to suboptimal coordination and team performance. For instance, if multiple members gather at the same subtask location, it results in inefficient task allocation, where one subtask remains unattended while two members redundantly perform the same task \subref{fig: workplace suboptimal}. Like many real-world scenarios, this task engenders heterogeneous and potentially suboptimal demonstrations of teamwork. This paper focuses on learning models of team behavior in this challenging setting from demonstrations.}
  \label{fig: workplace}
\end{figure*}

%===== existing multi-agent imitation learning problem
To address this challenge, recent approaches have extended imitation learning to multi-agent settings \cite{song2018multi,yu2019multi,bhattacharyya2018multi,yang2020bayesian}. These \textit{Multi-agent IL} (MAIL) methods typically assume that demonstrations are generated by a single, well-defined multi-agent policy \cite{lin2019multi}. However, due to practical challenges, this assumption is difficult to satisfy in complex multi-agent tasks encountered in the real world.
As illustrated in Fig. \ref{fig: workplace 1}, complex multi-agent tasks often consist of multiple subtasks. Demonstrations of such tasks frequently involve a variety of subtask allocations, including suboptimal ones. Suboptimality can arise from various factors, including the decentralized nature of multi-agent task execution and the partial observability that individual agents have of the task environment and other agents \cite{reyes2019makes, seo2021towards, seo2025socratic}.
Consequently, real-world demonstrations inherently exhibit multiple modes of multi-agent behavior, diverging from the assumptions of most existing MAIL methods.
\ifarxiv
\blfootnote{This article is an extended version of an identically-titled paper accepted at the \textit{International Conference on Autonomous Agents and Multiagent Systems (AAMAS 2025)}.}
\else
\blfootnote{An extended version of this paper, which includes supplementary material mentioned in the text, is available at \url{http://tiny.cc/dtil-appendix}}
\fi

%===== HIL as an approach to handle multimodal behavior.
In single-agent settings, \textit{hierarchical imitation learning} methods that explicitly impose a hierarchical structure on an expert's decision-making have been employed to address the challenge of modeling multimodal behaviors \cite{unhelkar2019learning,jing2021adversarial,sharma2018directed,seo2024idil,orlov2022factorial, jain2024godice}. However, little work has been done to extend these methods to multi-agent settings. To our knowledge, \textit{Bayesian Team Imitation Learner} (\btil) is the only approach that applies hierarchical imitation learning in a multi-agent context \cite{seo2022semi}. However, \btil is built on variational inference and tabular representations, making it difficult to scale to tasks with high-dimensional states and long horizons.

%===== contribution
To enable MAIL for more complex tasks, this paper introduces \textit{Deep Team Imitation Learner} (\ouralg), a multi-agent hierarchical imitation learning algorithm. \ouralg rigorously extends single-agent hierarchical imitation learning to collaborative tasks conducted in partially observable environments, enabling the learning of multimodal team behaviors from heterogeneous demonstrations, even in tasks with long horizons and continuous state representations.
At its core, \ouralg leverages the state-action distribution matching framework, a mainstay of state-of-the-art IL methods due to its performance and scalability \cite{ho2016generative,garg2021iq}. 

While recent hierarchical IL methods have applied distribution matching in single-agent settings \cite{jing2021adversarial,seo2024idil}, its extension to learning multimodal team behavior under partial observability remains unexplored. Notably, key theoretical results in distribution-matching-based hierarchical imitation learning, such as Theorem 1 (Bijection) in \cite{jing2021adversarial} and Theorem 2.4 (Convergence) in \cite{seo2024idil}, have only been proven under full observability. Thus, additional theoretical justification is required for learning multimodal team behavior in partially observable settings.
Hence, we first extend these theoretical results to the partially observable multi-agent hierarchical imitation learning setting. Next, leveraging these theoretical results, we derive \ouralg to effectively learn the hierarchical team policies. Finally, we evaluate \ouralg on a suite of collaborative tasks, demonstrating that it outperforms MAIL baselines in modeling team behavior across multiple scenarios.
\section{Related Works}
We begin with a brief overview of related research.

\subsection{Imitation Learning of Multimodal Behavior}
Extensive research has been conducted on learning multimodal behaviors from demonstrations. In works such as \cite{li2017infogail, hausman2017multi}, the authors extend Generative Adversarial Imitation Learning (GAIL) to learn a policy that depends on a learned latent state. This learned latent state effectively encodes different modes of the behavior. However, these methods assume the latent states remain static during a task execution; thus, their methods are unsuitable for modeling agent behavior whose latent states can change during the tasks. 
Other approaches, such as \cite{schmerling2018multimodal}, use Conditional Variational Autoencoders (CVAE) to capture multimodal human behavior, but the learned latent space responsible for generating multimodality lacks grounding and difficult to associate with specific subtasks. 

Informed by the Option framework \cite{sutton1999between}, another line of research leverages hierarchical policies to model multimodal behavior. Hierarchical policies typically consider two levels: high-level policies that govern decision-making over extended temporal intervals, and low-level policies responsible for executing specific actions within shorter time frames \cite{le2018hierarchical, jing2021adversarial}. 
To learn such policies from demonstrations, various approaches have been explored; e.g., variational inference \cite{unhelkar2019learning, orlov2022factorial}, hierarchical behavior cloning \cite{le2018hierarchical, kipf2019compile, zhang2021provable}, and hierarchical variants of GAIL \cite{sharma2018directed, lee2020learning, jing2021adversarial, chen2023multi}. Most recently, \cite{seo2024idil} propose a factored distribution-matching approach to train hierarchical policies. While all these methods show remarkable performance in single-agent tasks, their extension to multi-agent scenarios has been rarely explored and often lacks theoretical grounds.

\subsection{Multi-agent Imitation Learning}
Learning team behavior from demonstration can be framed as a multi-agent imitation learning problem. 
Since \cite{song2018multi} introduced the multi-agent variant of generative adversarial imitation learning (called MA-GAIL), several extensions have been proposed to enhance its training efficiency and scalability \cite{yu2019multi, liu2020multi, yang2020bayesian, bhattacharyya2018multi, bhattacharyya2019simulating, sengadu2023dec}.
However, these methods generally assume that the demonstrations originate from a single multi-agent policy, limiting their ability to capture diverse team behaviors. 
Despite the importance of accounting for multimodality when modeling team behavior, only a few approaches have incorporated latent states into MAIL.
Among these, \cite{le2017coordinated} model agent roles as latent variables, while \cite{wang2022co} represent strategies as latent features. 
However, both methods assume static latent states and do not consider their dynamics. 
To address this gap, \cite{seo2022semi} propose Bayesian Team Imitation Learner (\btil), a multi-agent extension of \cite{unhelkar2019learning}, which can learn hierarchical policies of all team members from team demonstrations. Nonetheless, \btil struggles with large, complex tasks and suffers from compounding errors. In contrast, \ouralg overcomes these limitations by utilizing function approximators (e.g., neural networks) and augmenting demonstrations with online samples collected during training.

\section{Background}
\label{sec. background}
In this section, we present preliminaries on distribution-matching-based imitation learning and introduce the mathematical model of team behavior.

\subsection{Imitation Learning via\\ Distribution Matching}
\label{sec. distribution matching}
Using the Markov Decision Process (MDP) framework, an agent's behavior is defined by a policy $\pi(a|s)$, which represents the probability distribution of an action $a$ given a state $s$. The goal of imitation learning is to minimize the discrepancy (represented as a loss $L$) between the learner's policy $\pi$ and the expert's policy $\pi_E$: $\min_{\pi} L(\pi, \pi_E)$. However, due to the inaccessibility of $\pi_E$, this objective is often ill-defined and highly challenging to solve.

To address this, \citet{ho2016generative} reformulate imitation learning as a problem of matching the occupancy measures of the learner and the expert. The (normalized) occupancy measure of a policy $\pi$ is defined as $\OMsa{\pi}{} \doteq \sumP{\TmS{t}\myeq s, \TmA{}{t}\myeq a|\pi}$, implying the stationary distribution over states $s$ and actions $a$ induced by $\pi$. Thanks to the one-to-one correspondence between a policy $\pi(s|a)$ and its occupancy measure $\OMsa{\pi}{}$ \cite{puterman2014markov}, matching the occupancy measures is equivalent to matching the policies. This can be formalized as:
\begin{align*}
    \argmin_{\pi}\Dfsa{\pi}{\piE}{} 
\end{align*}
where $\rhopi$ is the learner's occupancy measure, $\rhoE$ is the expert's occupancy measure, and $D_f$ denotes the $f$-divergence \cite{ghasemipour2020divergence}. While direct access to $\rhoE$ is still infeasible, it can be approximated using the empirical distribution calculated from expert demonstrations $\DemoTeam$. 
Due to its performance and scalability, since its introduction, the distribution matching approach has become a mainstream technique in imitation learning, giving rise to numerous variants, including the following multi-agent and hierarchical ones.

\paragraph{Multi-Agent Variants.} 
Assuming a unique equilibrium in multi-agent behaviors, \citet{song2018multi} formulate an occupancy measure matching problem in multi-agent settings:
\begin{align}
   \argmin_{\pi}\sum_{i=1}^n\Dfsa{\pi_i,\OtPi{i}}{\pi_{E}}{i}  \label{eq. MAIL objective}
\end{align}
where $\OtPi{i}$ is joint policies except the $i$-th agent's policy, $\pi_E$ denotes multi-agent expert policy at equilibrium, and $\OMsa{\pi_i,\OtPi{i}}{i} \doteq \sumP{\TmS{t}\myeq s, \TmA{i}{t}\myeq a_i|\pi_i, \OtPi{i}}$. This objective function implies that we can iteratively minimize the objective with respect to individual policies $\pi_1, \cdots, \pi_n$, and the updates can be calculated similarly to the single-agent problem by considering other agents' policies as part of the environment dynamics. 

\paragraph{Hierarchical Variants.} 
\citet{jing2021adversarial} extend occupancy measure matching approach to hierarchical imitation learning.
They model an agent behavior as an option policy $\tilpi=(\pi_L, \pi_H)$, where $\pi_L(a|s, x)$ and $\pi_H(x|s, x^-)$ are referred to as a low- and high-level policies, respectively, with $x$ being an option. Additionally, they define an option-occupancy measure corresponding to $\tilpi$ as 
$$\OMsaxx{\tilpi} \doteq \sumP{\TmS{t}\myeq s, \TmA{}{t}\myeq a, \TmX{}{t}\myeq x, \TmX{}{t-1}\myeq \PrX{}|\tilpi}$$ 
and prove the one-to-one correspondence between  $\tilpi$ and $\rho_{\tilpi}$. Thus, hierarchical imitation learning can also be cast as distribution matching between two option-occupancy measures $\rho_\tilpi$ and $\rho_E$:
\begin{align}
   \argmin_{\tilpi}\Dfsaxx{\tilpi}{\piE}     \label{eq. HIL objective}
\end{align}

\subsection{Model of Team Behavior}
\label{sec. teamwork model}
We borrow a model of team behavior introduced in \cite{seo2023automated} to represent our multi-agent behavior in sequential team tasks. The model consists of a decentralized partially observable MDP (Dec-POMDP) to capture the task dynamics and an Agent Markov models (AMM) to represent agents' multimodal behavior~\cite{oliehoek2016concise, unhelkar2019learning}.

\paragraph{Dec-POMDP} Dec-POMDP is a probabilistic model representing the dynamics of the partially observable sequential multi-agent tasks. It is expressed as a tuple $\calM = (n, S,\times A_i, T, \mu_0, \times \Omega_i, \{O_i\}, \gamma)$, where $n$ is the number of agents, $S$ is the set of states $s$, $A_i$ is the set of the $i$-th agent's actions $a_i$, $\Omega_i$ is the set of the $i$-th agent's observations $\obs_i$, $T(\NxS|s, a)$ denotes the probability of a state $\NxS$ transitioning from a state $s$ and a joint action $a\myeq(a_1, \cdots, a_n)$, $\mu_0(s)$ is an initial state distribution, and $\gamma$ is a discount factor. 
We define an extended action set as $\ExtA_i=A_i \cup \{\#\}$, where the symbol $\#$ denotes ``Not Available''.
$O_i:S \times \ExtA \rightarrow \Omega_i$ is an observation function for the $i$-th agent, which maps a pair of state $s$ and previous joint action $\PrJoA$ to an individual observation $\obs_i \in \Omega_i$. 
The values of previous actions $\PrA{i}$ at time $t\myeq 0$ are set as $\#$.
We denote capital letters without subscripts as joint spaces or joint functions, e.g., $A=\times A_i$ for a joint action space and $O=\prod_i O_i$ for a joint observation function.

\paragraph{Agent Markov Model (AMM)} 
When faced with complex team tasks, humans typically break them down into subtasks and dynamically adjust their plan regarding which subtasks to perform and in what order. Once they decide on the next subtask, they execute the necessary actions to complete it. 
AMM is designed to account for this hierarchical human behavior, and thus, is equivalent to hierarchical policies of Option framework with a one-step option \cite{sutton1999between, jing2021adversarial}.
Given a task model $\calM$, AMM defines the behavior model of the $i$-th agent as a tuple $(X_i, \pi_i, \Tx_i; \calM)$, where $X_i$ is the set of the possible subtasks, $\pi_i(a_i|\obs_i, x_i)$ denotes a subtask-driven policy, and $\Tx_i(x_i|\obs_i, \PrX{i})$ is the probability of an agent choosing their next subtask $x_i$ based on an observation $\obs_i$ and the current subtask $\PrX{i}$\footnote{In the following sections, we will omit the subscript $i$ from the function inputs, e.g., $\pi_i(a|\obs, x) \doteq \pi_i(a_i|\obs_i, x_i)$, when it is clear the functions pertain to individual observations, actions, and subtasks.}.
Following \cite{jing2021adversarial}, we define the value of the previous subtask at time $t\myeq 0$ as $\#$ and express the initial distribution of subtasks as $\Tx_i(x_i|\obs_i,\PrX{i}\myeq\#)$. Similar to previous works  \cite{jing2021adversarial,seo2022semi}, we assume the set of possible subtasks, $X$, is finite and given as prior knowledge. We then represent the AMM for the $i$-th agent simply as $(\pi_i, \Tx_i)$, omitting the non-learnable components $X_i$ and $\calM$.

\section{Problem Formulation}
\label{sec. problem formulation}

While \citet{seo2023automated} emphasize the need for modeling team behavior under partial observability and propose a corresponding mathematical model, few multi-agent imitation learning methods have been developed to address such complex teamwork models. To our knowledge, \btil is the only approach to learning multi-model team behavior from demonstrations \cite{seo2022semi}. However, \btil does not account for partial observability, and its applicability is limited to small, discrete domains, as both high- and low-level policies are constrained to categorical distributions. Thus, a practical method for learning team behavior models that addresses multimodality, partial observability, and scalability is still lacking. To derive such a method, we first formulate the problem of multi-agent imitation learning from heterogeneous demonstrations.

\subsection{Formalizing Hierarchical Multi-Agent Distribution Matching}
\label{sec. problem statement}
Inspired by the recent success of distribution matching-based imitation learning, we aim to apply this method to learn the team behavior model. Similar to the hierarchical variants for fully observable single-agent scenarios in Sec. \ref{sec. distribution matching}, we define \AMoccu occupancy measure for the $i$-th agent given a task model $\calM$ and joint agent models $\JoAM$ as:  
% $\OMam{\calN_i, \OtAM{i}}{i} = \sum_{-i} \OMam{\JoAM}{}$.
\begin{align*}
    &\OMam{\calN_i, \OtAM{i}}{i} \\
    &\hspace{3ex}\doteq \sumP{\TmO{i}{t}\myeq o_i, \TmA{i}{t}\myeq a_i, \TmX{i}{t}\myeq x_i, \TmX{i}{t-1}\myeq \PrX{i}|\JoAM, \calM}
\end{align*}
The notation $\rho_{\calN_i, \OtAM{i}}$, borrowed from \magail \cite{song2018multi}, represents the occupancy measure induced by the agent $i$'s policy $\calN_i$ and other agents' policy $\OtAM{i}$. 
Unless ambiguous, we simply denote $\rho_{i} \doteq \rho_{\calN_i, \OtAM{i}}$.

By combining this occupancy measure with the multi-agent variant of distribution matching introduced in Sec. \ref{sec. distribution matching}, we can formulate the distribution-matching problem for team behavior with $n$ agents as follows:
\begin{align}
    \argmin_{\JoAM} \sum_{i=1}^n \Dfam{i}{E}{i}{}   \label{eq. original objective}
\end{align}
where $\rho_{E}$ denotes the \AMoccu occupancy measure of the expert team model $(\pi_E, \Tx_E)$. In Sec. \ref{sec. theoretical grounds}, we provide theoretical justification for using Eq. \ref{eq. original objective} as the imitation learning objective. While this extension seems natural, the theoretical results in the existing literature are insufficient to guarantee that occupancy measure matching is equivalent to policy matching in partially observable multi-agent scenarios.

Additionally, informed by \idil \cite{seo2024idil}, we aim to adopt a factored approach to minimize the objective above. This factored approach enables us to leverage existing non-adversarial imitation learning methods, such as \iql \cite{garg2021iq}, which demonstrate more stable training compared to generative adversarial approaches. However, since the theoretical foundations for factored distribution matching are also developed under the assumption of full observability, further theoretical analysis is necessary to ensure its applicability in partially observable multi-agent settings. We provide this analysis in Sec. \ref{sec. theoretical grounds}.

\subsection{Problem Statement}
Since we cannot know which subtasks each member has in mind at the time of task execution, demonstrations contain only observations and actions. We define the set of $d$ demonstrations as $\DemoTeam = \Set{\TrajTeam_m}{m=1}{d}$, where $\TrajTeam \myeq \TmOATrj{}{0:h}$ is a trajectory of a team's task execution. 
% Without subtask information, demonstrations may appear markedly different, making $\DemoTeam$ consist of heterogeneous demonstrations.
We denote an individual trajectory of the $i$-th agent and the set of them as $\Traj{i} = \TmOATrj{i}{0:h}$ and $\Demo{i} = \Set{\Traj{m, i}}{m=1}{d}$, respectively, adding a subscript $i$.
The sequence of expert's subtasks corresponding to the $m$-th demonstration is defined as $\XSeq{m} = (\TmX{m}{0:h})$. Since the labels of the subtasks are challenging to collect in practice, only a small portion of them (e.g., for $l (\leq d)$ demonstrations) are optionally available.
Thus, our goal is to learn agent models $\{(\pi, \Tx)\}_{1:n}$ that exhibit the behaviors of $n$ team members from the following inputs: a multi-agent task model $\calM$, the set of possible subtasks $X$, heterogeneous demonstrations $\DemoTeam$, and optionally partial labels of subtasks $\Set{\XSeq{m}}{m=1}{l}$.

\section{Learning Model of Teamwork via Factored Distribution Matching}
\label{sec. theoretical grounds}
As mentioned in Sec. \ref{sec. problem statement}, matching occupancy measures does not always guarantee matching the team behavior models unless a one-to-one correspondence is established between the agent model (i.e., partial observation-based hierarchical policy) $\calN_i = (\pi_i, \Tx_i)$ and its \AMoccu occupancy measure $\rho_{i}$. 
Therefore, we first present this one-to-one correspondence, which extends the \textit{Theorem 1} from \cite{jing2021adversarial} to multi-agent partially-observable settings.

\begin{theorem}
\label{thm. bijection}
For each agent $i$, given a multi-agent task model $\calM$ and other agents' models $\OtAM{i}$, suppose $\rho_i$ is the \AMoccu occupancy measure for a stationary agent model $\calN_i = (\pi_i, \Tx_i; \calM)$ where 
\begin{align*}
    \Tx_i(x|\obs, \PrX{}) = \frac{\sum_{a}\OMam{i}{}}{\sum_{a,x}\OMam{i}{}}, ~
    \pi_i(a|\obs, x) = \frac{\sum_{x^-}\OMam{i}{}}{\sum_{x^-, a}\OMam{i}{}}.
\end{align*}
Then, $\calN_i = (\pi_i, \Tx_i; \calM)$ is the only agent model whose \AMoccu occupancy measure is $\rho_i$.
\end{theorem}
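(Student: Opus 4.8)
The plan is to mirror the structure of the single-agent bijection argument of \citet{jing2021adversarial}, but to carry it out inside the Dec-POMDP by treating the other agents' fixed models $\OtAM{i}$ as part of a (non-Markov-in-$s$ but Markov-in-observation) stochastic environment for agent $i$. Concretely, I would first fix $i$, $\calM$, and $\OtAM{i}$ throughout, and argue that conditioned on these, the process governing agent $i$'s variables is a well-defined Markov chain on the augmented ``state'' $(\obs_i, \PrX{i})$ together with the action $a_i$ and the fresh subtask draw $x_i$. The key observation is that the Dec-POMDP transition $T$, the initial distribution $\mu_0$, the observation functions $O_j$, and the other agents' stationary policies $(\pi_j,\Tx_j)_{j\neq i}$ jointly induce a fixed conditional law over agent $i$'s next observation given the current augmented state and agent $i$'s action; this is precisely the abstraction that lets the single-agent machinery go through. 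I would make this reduction explicit as a first lemma.

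Next, I would establish that the map $\calN_i \mapsto \rho_i$ is injective on stationary agent models. Given $\rho_i$, the two marginalization formulas in the statement recover a candidate $(\Tx_i, \pi_i)$; I would verify these are genuine conditional distributions (nonnegativity is immediate, and summing to one follows from the definition of $\rho_i$ as a normalized occupancy measure, using that the denominators are the appropriate marginals of $\rho_i$). The crux is the converse: that the agent model so recovered actually \emph{generates} $\rho_i$ as its occupancy measure, and that no other stationary model does. For this I would show that $\rho_i$ satisfies a set of Bellman-flow (balance) equations analogous to those in \cite{jing2021adversarial}: the marginal $\sum_{a,x}\OMam{i}{}$ over $(\obs,\PrX{})$ equals a discounted sum of (initial mass) plus (inflow from all predecessors $(\obs^-, x^{--}, a^-, x^-)$ weighted by the induced observation-transition kernel, by $\pi_i$, and by $\Tx_i$). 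Plugging the recovered $(\pi_i,\Tx_i)$ into the generative process for agent $i$ and invoking the one-to-one correspondence between a policy and its occupancy measure in a Markov chain (Theorem of \citet{puterman2014markov}, lifted to the augmented state space) then forces the resulting occupancy measure to coincide with $\rho_i$, and uniqueness follows because any two stationary models inducing $\rho_i$ must, by the marginalization identities, agree on $(\pi_i, \Tx_i)$ wherever $\rho_i$ places positive mass, and are unconstrained (hence can be taken identical by convention) elsewhere.

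The main obstacle I anticipate is the reduction step itself: unlike the fully observable single-agent case, the ``state'' $\obs_i$ is not Markov with respect to the underlying $S$, so I cannot directly quote the MDP occupancy-measure bijection. The careful point is to verify that, with $\OtAM{i}$ and $\calM$ held fixed, the pair $(\obs_i,\PrX{i})$ \emph{does} evolve as a time-homogeneous Markov chain under agent $i$'s stationary model — i.e., that the hidden underlying state $s$ and the other agents' subtasks can be integrated out consistently at every step without introducing dependence on history beyond $(\obs_i,\PrX{i})$. This requires being precise about the timing convention (when $x_i$ is drawn relative to $a_i$ and relative to the observation update) and about the role of the ``Not Available'' symbol $\#$ at $t=0$; once the augmented chain is shown to be genuinely Markov and time-homogeneous, the remainder of the argument is a direct transcription of the single-agent proof with $s \leftarrow (\obs_i,\PrX{i})$ and with the induced kernel in place of $T$. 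A secondary, more routine obstacle is bookkeeping the normalization constants so that $\rho_i$ is treated consistently as the \emph{normalized} occupancy measure throughout, matching the $(1-\gamma)$ factors implicit in the definition.
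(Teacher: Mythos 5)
Your proposal takes essentially the same route as the paper's proof: lift to the augmented state $(o_i, x_i^{-})$ and augmented action $(x_i, a_i)$, fold the other agents' fixed models and the hidden Dec-POMDP state into an induced transition kernel, verify the Bellman-flow constraints, and invoke the policy/occupancy-measure bijection of Syed et al. together with the factorization of the merged policy $\tilde{\pi}_i$ into $(\pi_i, \Tx_i)$ (Lemma 3 of Jing et al.). The reduction step you flag as the main obstacle is exactly what the paper makes explicit, defining the induced kernel $\tilde{T}_i$ via the posterior $p(s, a_{-i} \mid o_i, x_i^{-}, x_i, a_i)$ computed from the joint stationary distribution.
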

This can be proved similarly to \textit{Theorem 1} of \cite{jing2021adversarial} after deriving the stationary distributions of policy $\tilde{\pi}(w|v)$ and state transition $\tilde{T}(v'|v, w)$, where $v\doteq(\obs, x^-)$ and $w\doteq(x, a)$.
The complete proof is provided in the Appendix.
This theorem implies that we can consider the imitation learning of agent models $\JoAM$ as matching the \AMoccu occupancy measures between $\rho_{\calN_i, E_{-i}}$ and $\rho_{E}$ for all $i$. Here, $\rho_{\calN_i, E_{-i}}$ denotes the \AMoccu occupancy measure induced by the $i$-th agent model $\calN_i$ with other agents' models given as  expert models $E_{-i} \doteq (\Others{\pi_E}{i}, \Others{\Tx_E}{i})$. Thus, we can factorize the occupancy measure matching of the joint team model as follows: $\argmin_{\JoAM}\sum_{i=1}^n \Dfdot{\calN_i, E_{-i}}{E}$. Due to the one-to-one correspondence, the following two problems lead to the same optimal solution, $\calN_E$:
\begin{align*}
    &\argmin_{\JoAM}\sum_{i=1}^n \Dfdot{\calN_i, E_{-i}}{E}  \\
    &= \argmin_{\JoAM} \sum_{i=1}^n \Dfdot{\calN_i, \calN_{-i}}{E}  = \calN_E.
\end{align*}
This justifies our objective function, Eq \ref{eq. original objective}, for learning the expert team behavior model via distribution matching.

\citet{seo2024idil} suggest that matching \AMoccu occupancy measure amounts to matching two factored occupancy measures, $\OMpi{}{}$ and $\OMtx{}{}$, simultaneously with their expert counterparts in the single-agent problem. We refer to these factored occupancy measures as \PIoccu occupancy measure and \TXoccu occupancy measure, respectively, and define them for each agent $i$ as:
\begin{align*}
    \OMpi{i}{} &= \sumP{\TmO{i}{t}\myeq o_i, \TmA{i}{t}\myeq a_i, \TmX{i}{t}\myeq x_i|\JoAM, \calM}\\
    \OMtx{i}{} &= \sumP{\TmO{i}{t}\myeq o_i, \TmX{i}{t}\myeq x_i, \TmX{i}{t-1}\myeq \PrX{i}|\JoAM, \calM}
\end{align*}
With the factored occupancy measures, we can further factorize Eq. \ref{eq. original objective} as follows:
\begin{align}
&\argmin_{\JoAM} \sum_{i=1}^n \left(\Dfpi{i}{E}{i}{} \right.  \nonumber \\
&\hspace{20ex}\left.+ \Dftx{i}{E}{i}{}\right)  \label{eq. actual objective}
\end{align}
The proof for Eq. \ref{eq. actual objective}, along with the adjusted theorems that formulate this factored objective for the multi-agent scenario, is provided in the Appendix.

\section{Deep Team Imitation Learner}
With the theoretical foundations established in the previous section, we now present \ouralg, a practical algorithm designed to minimize Eq. \ref{eq. actual objective}. The distribution matching framework enables policies to be represented using deep neural networks and efficiently learns them by leveraging additional interactions with the environment. As a result, \ouralg is capable of learning team behavior models even in highly complex tasks.

As mentioned in Sec. \ref{sec. distribution matching}, the expert occupancy measure is typically estimated from expert demonstrations, i.e., $\OMam{}{} \approx \EX_{\DemoTeam}[\Ind(\oaxx{})]$. However, as our demonstration $\DemoTeam$ does not contain the labels of subtasks, we cannot compute this empirical distribution. Thus, similar to \cite{jing2021adversarial}, we take an expectation-maximization (EM) approach to iteratively optimize Eq. \ref{eq. actual objective}. Alg. \ref{alg. learner} outlines \ouralg. In line 4 (E-step), it predicts unknown expert intents from $\DemoTeam$ using the current estimate of agent models $(\PaPiTx{i}{k}{k})$. In line 5, it collects online samples by interacting with the environment. Then, in line 6 (M-step), it updates agent model parameters $(\PiParam, \TxParam)$ via occupancy measure matching.

\begin{algorithm}[t]
    \caption{\ouralg: \ouralgfull}
    \label{alg. learner}
    \begin{algorithmic}[1]
        \STATE \textbf{Input}: Data $\DemoTeam = \{\Traj{m}\}_{m=1}^d$ and $\XLabel{m}{m=1}{l}$.
        \STATE \textbf{Initialize}: $(\PiParam_i, \TxParam_i)$ for all $i=1:n$ where $\PaAM{i}{} = (\PaPiTx{i}{}{})$
        \REPEAT
            \STATE \textit{E-step}: Infer expert intents $\XLabel{m}{m=l}{d}$ with $\DemoTeam$ and $(\PaPiTx{}{k}{k})$ for all $i=1:n$; and define $\DemoTeamX\doteq \DemoTeam \cup \XEst{m}{m=1}{d}$
            \STATE Collect rollouts $\RolloutTeam = \{(\obs, x, a)^{0:h}\}$ using $(\PaPiTx{}{k}{k})$
            \STATE \textit{M-step}: Update $\PaPi{i}{k+1}$ via Eq. \ref{eq. pi objective} and  $\PaTx{i}{k+1}$ via Eq. \ref{eq. tx objective} using $\DemoX{i}, \Rollout{i}$ for all $i=1:n$
        \UNTIL{Convergence}
    \end{algorithmic}
\end{algorithm}

\paragraph{E-step.}
For each iteration $k$, \ouralg infers the unknown subtasks of demonstration $\Traj{}=(\TmOA{}{0:h})$ based on the maximum a posteriori (MAP) estimation. Given the current estimate of agent models $\PaAM{}{k} = (\PaPiTx{}{k}{k})$, we can express the MAP estimation as follows: $ \XEstSeq{} = \argmax_{\TmX{}{0:h}} p(\TmX{}{0:h}|\TmOA{}{0:h}, \PaAM{}{k}) $.
Similar to the Viterbi algorithm, this can be effectively computed via dynamic programming \cite{seo2022semi,jing2021adversarial}. Since its computation can be decentralized for each agent, its time complexity is $O(nh|\bar{X}|^2)$ where $|\bar{X}| \doteq \max_{i=1:n} |X_i|$.
The derivations are provided in the Appendix. With this estimate, we can obtain subtask-augmented trajectories $\TrajX{} = (o, a, \xhat)^{0:h}$. From $\DemoTeamX \doteq \DemoTeam \cup \Set{\XSeq{m}}{m=1}{d}=\Set{(o, a, \xhat)^{0:h}}{}{}$, we can compute the estimates of the expert occupancy measures for the $k$-th iteration, denoted by $\OMamEst{E}{}{k}$, $\OMpiEst{E}{}{k}$, and $\OMtxEst{E}{}{k}$, respectively.

\paragraph{M-step.}
We incrementally update the agent model parameters $(\PiParam, \TxParam)$ to minimize the difference between the learner's occupancy measure $\rho_{\calN}$ and the $k$-th estimate of expert occupancy measure $\rhoE^k$. Similar to IDIL \cite{seo2024idil}, \ouralg takes a factored approach to minimize Eq. \ref{eq. actual objective}. 
Specifically, when updating the low-level policy parameters $\PiParam_i$, it assumes $\Tx_i$ is fixed and minimizes only the first term of Eq. \ref{eq. actual objective} with respect to $\pi_{i}$: 
\begin{align}
    \argmin_{\pi_i} \Dfpi{\pi_i}{E}{i}{k} \label{eq. pi objective}
\end{align}
If we introduce $\Spi \doteq (\obs, x)$, this is the same as the occupancy-measure matching of a conventional policy $\pi(a|\Spi)\doteq \pi(a|\obs, x)$. 

Similarly, \ouralg updates the high-level policy parameters $\TxParam_i$ by only minimizing the second term of Eq. \ref{eq. actual objective} with respect to $\Tx_i$, fixing $\pi_i$:
\begin{align}
    \argmin_{\Tx_i} \Dftx{\Tx_i}{E}{i}{k}  \label{eq. tx objective}
\end{align}
This also reduces to conventional imitation learning of a policy $\pi(x|\Stx) \doteq \Tx(x|\obs,\PrX{})$, if we define $\Stx \doteq (\obs, \PrX{})$.
In this work, we opt for IQ-Learn \cite{garg2021iq} for both Eq. \ref{eq. pi objective} and Eq. \ref{eq. tx objective} to compute the gradients of $\PiParam$ and $\TxParam$, respectively.

\section{Convergence Properties}
\newcommand{\postPrX}[2]{p(\PrX{#1}|\oax{#1}, #2)}
\newcommand{\Dfpilemma}[1]{\Dfpi{\pi,\Tx}{E}{#1}{k}}
\newcommand{\Dfamlemma}[1]{\Dfamgrv{\pi,\Tx}{E}{#1}{k}}
While the optimization of Eq. \ref{eq. actual objective} will provide us with agent models whose occupancy measure is close to that of experts, it is not guaranteed that our practical, factored approach of iteratively minimizing each term of Eq. \ref{eq. actual objective} will converge. Although \idil provides theoretical analysis regarding the convergence of this factored distribution matching, their analysis is made under the assumption of full observability, thereby inapplicable to our setting. Thus, we provide a theoretical analysis regarding the convergence of \ouralg in this section.
% Thus, we provide a theoretical analysis regarding the convergence of \ouralg in the Appendix.

\begin{figure*}[t]
\newcommand\gapa{0.175}
\newcommand\gapb{0.25}
\newcommand\subgap{1.}
  \begin{subfigure}[b]{\gapa\linewidth}
      \centering
      \includegraphics[height=35mm]{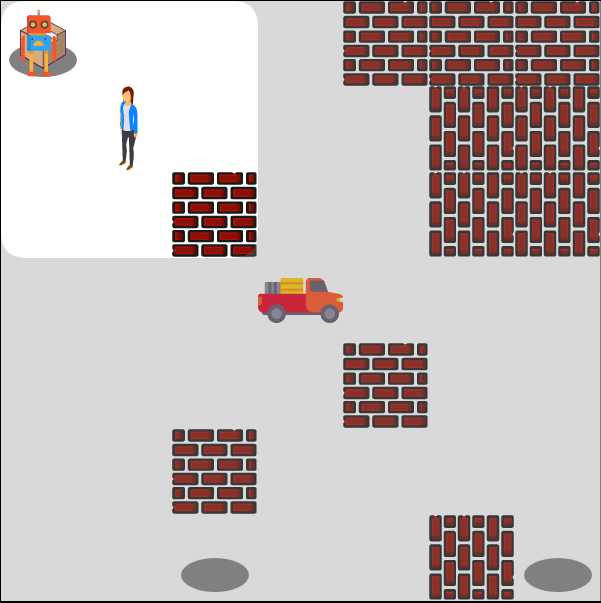}
      \caption{\movers}
      \label{fig: movers}
  \end{subfigure}
  \hfill 
  \begin{subfigure}[b]{\gapa\linewidth}
      \centering
      \includegraphics[height=35mm]{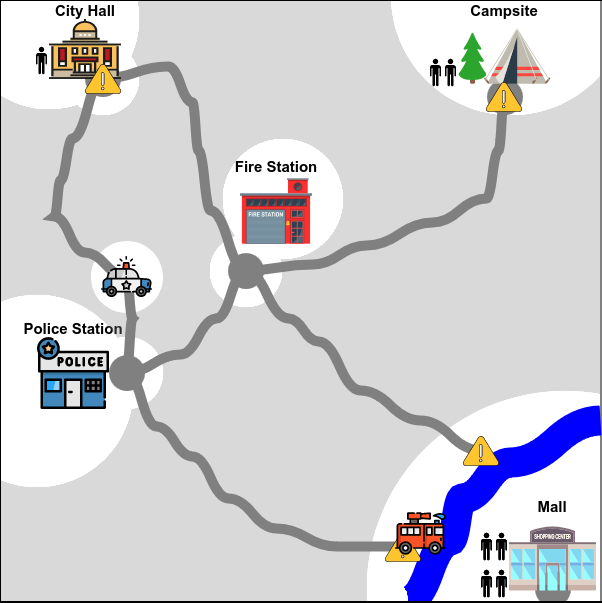}
      \caption{\rescue}
      \label{fig: flood}
  \end{subfigure}
  \hfill  
  \begin{subfigure}[b]{\gapb\linewidth}
      \centering
      \includegraphics[height=35mm]{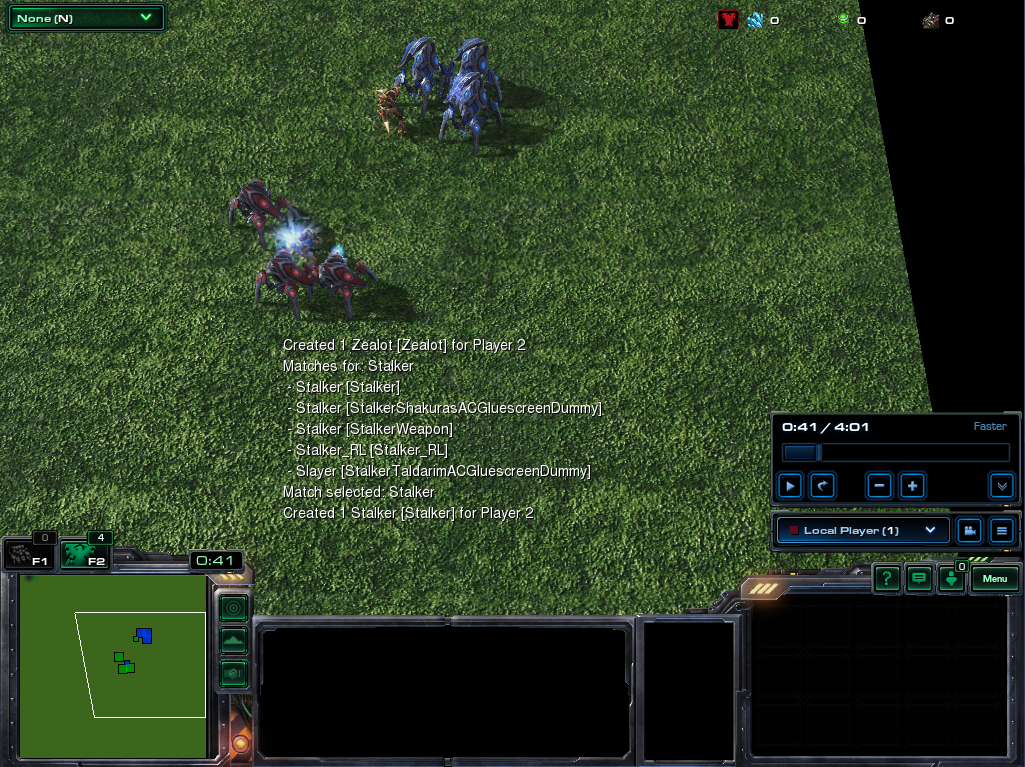}
      \caption{\protoss}
      \label{fig: protoss}
  \end{subfigure}
  \hfill  
  \begin{subfigure}[b]{\gapb\linewidth}
      \centering
      \includegraphics[height=35mm]{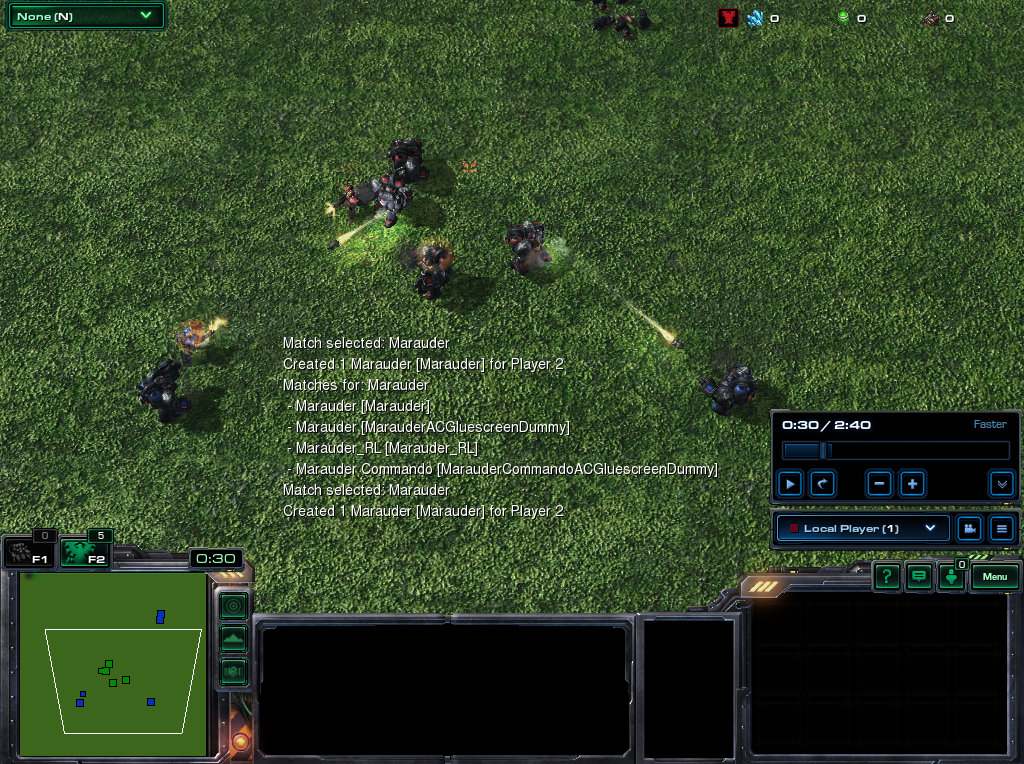}
      \caption{\terran}
      \label{fig: terran}
  \end{subfigure}

  \captionsetup{subrefformat=parens}
  \caption{Snapshopts of Experimental Domains}
  \label{fig: domains}
\end{figure*}

We start the analysis by defining two approximations of the expert \AMoccu occupancy measure, $\chkRho_E^k$ and $\grvRho_E^k$. These approximations are computed from the estimates of the expert's \PIoccu occupancy and \TXoccu occupancy measures, i.e., $\OMpiEst{E}{}{k}$ and $\OMtxEst{E}{}{k}$, with the estimate of expert models $\calN^{k} = (\pi^{k}, \Tx^{k})$:
\begin{align*}
    \chkRho_E^k(\oaxx{i}) &\doteq \OMtxEst{E}{i}{k}\pi_i^k(a_i|o_i, x_i) \\
    \grvRho_E^k(\oaxx{i}) &\doteq \OMpiEst{E}{i}{k} \postPrX{i}{\calN^k}.
\end{align*}
We can draw a relationship between the $\PIoccu$ occupancy measure matching and the $\AMoccu$ occupancy measure matching problems as follows:
\begin{lemma}
\label{thm. oax to oaxx}
    Define $\Delta(\PiParam, \PiParam^k)\doteq\epsilon_1$ and $\Delta(\TxParam, \TxParam^k)\doteq\epsilon_2$. If $\PaPi{}{}$ is an $K_1$-Lipschitz function of $\PiParam$, $\PaTx{}{}$ is an $K_2$-Lipschitz function of $\TxParam$, and $\max(K_1, K_2)(|\epsilon_1| + |\epsilon_2|)$ is sufficiently small, then 
    \begin{align*}
        &\Dfpilemma{i} \\
        &\hspace{20ex}=\Dfamlemma{i}
    \end{align*}
    % $\Dfpilemma{i}=\Dfamlemma{i}$.
\end{lemma}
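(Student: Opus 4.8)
The plan is to view the claim as an instance of the elementary ``data-processing with equality'' property of $f$-divergences: whenever two joint distributions factor as $P(u,v)=P(u)\,r(v\mid u)$ and $Q(u,v)=Q(u)\,r(v\mid u)$ with the \emph{same} conditional $r(v\mid u)$, the likelihood ratio depends only on $u$, so summing $v$ out inside $D_f$ gives $D_f(P(u,v)\,\|\,Q(u,v))=D_f(P(u)\,\|\,Q(u))$. I would apply this with $u=(\obs_i,a_i,x_i)$ and $v=\PrX{i}$. On the expert side the required factorization is simply the \emph{definition} of $\grvRho_E^k$: since $\grvRho_E^k(\oaxx{i})=\OMpiEst{E}{i}{k}\,\postPrX{i}{\calN^k}$, its $(\obs_i,a_i,x_i)$-marginal is exactly $\OMpiEst{E}{i}{k}$ and its $\PrX{i}$-conditional is exactly $\postPrX{i}{\calN^k}$. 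On the learner side it is the trivial factorization of the $\AMoccu$ occupancy measure $\rho_i$ into its $\PIoccu$ marginal and the conditional $\rho_i(\PrX{i}\mid\obs_i,a_i,x_i)$. Hence $\Dfpilemma{i}=\Dfamlemma{i}$ reduces to the single statement that the learner's posterior over the previous subtask coincides with the one carried by the iterate $\calN^k$, i.e. $\rho_i(\PrX{i}\mid\obs_i,a_i,x_i)=\postPrX{i}{\calN^k}$.

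Next I would analyze this posterior via the factorization already extracted in the proof of Theorem~\ref{thm. bijection}. Writing $v=(\obs,\PrX{})$ and $w=(x,a)$ as there, the $\AMoccu$ occupancy splits as $\rho_i(\obs,\PrX{},x,a)=g_i(\obs,\PrX{})\,\Tx_i(x\mid\obs,\PrX{})\,\pi_i(a\mid\obs,x)$, where $g_i$ is the stationary distribution of $v$ induced by $\calN_i$ against the (fixed) other-agent models and $\calM$. Since the low-level factor $\pi_i(a\mid\obs,x)$ is independent of $\PrX{}$, it cancels in Bayes' rule, so $\rho_i(\PrX{}\mid\obs,a,x)$ is in fact independent of $a$ and depends on the model only through the pair $(g_i,\Tx_i)$; the analogous statement holds for $\calN^k$ with $(g_i^k,\Tx_i^k)$. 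Therefore the two posteriors can disagree only to the extent that $\Tx_i$ differs from $\Tx_i^k$ and $g_i$ differs from $g_i^k$.

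The final step --- where the hypotheses enter --- is to bound these two discrepancies and propagate them into $D_f$. The $K_2$-Lipschitz assumption bounds $\|\Tx_i-\Tx_i^k\|$ by $K_2\epsilon_2$; the $K_1$-Lipschitz assumption bounds $\|\pi_i-\pi_i^k\|$ by $K_1\epsilon_1$, and a standard perturbation bound for the stationary distribution of the (Dec-POMDP$+$AMM)-induced Markov chain then controls $\|g_i-g_i^k\|$ by $O(\max(K_1,K_2)(\epsilon_1+\epsilon_2))$. Combining these with local Lipschitzness of the Bayes map and of $D_f$ in its arguments, both posteriors, and hence $\Dfpilemma{i}$ and $\Dfamlemma{i}$, agree up to an error of order $\max(K_1,K_2)(|\epsilon_1|+|\epsilon_2|)$; under the smallness hypothesis this error is negligible, and it vanishes exactly in the limit $\max(K_1,K_2)(|\epsilon_1|+|\epsilon_2|)\to 0$, yielding the stated equality.

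I expect the main obstacle to be precisely this last step: turning parameter-space closeness ($\epsilon_1,\epsilon_2$) into closeness of the \emph{occupancy-induced} object $g_i$ and then of the resulting posterior and divergence. This needs (i) a quantitative perturbation estimate for the stationary distribution of the policy-induced chain under partial observability --- the multi-agent, partially observable counterpart of the mixing/fundamental-matrix arguments used in the full-observability analysis of \idil --- stated while holding the other agents' models fixed, so the bound involves only agent $i$'s own displacement; and (ii) uniform control of the relevant normalizers (e.g. the customary bounded-likelihood-ratio assumption on $f$, or denominators bounded away from zero) so that $D_f$ is Lipschitz on the neighborhood considered. The algebraic reductions of the first two steps are routine once the factorization of $\rho_i$ from Theorem~\ref{thm. bijection} is available.
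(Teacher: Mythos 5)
You set up the problem exactly as the paper does: lift the \textsf{\Oam\Aam\Xam}-divergence against $\OMpiEst{E}{i}{k}$ to a joint divergence over $(\obs_i,a_i,x_i,\PrX{i})$ by attaching the conditional $p(\PrX{i}\mid \obs_i,a_i,x_i,\calN^k)$ to both arguments (your ``data-processing with equality'' step), and correctly identify that the whole lemma then hinges on the mismatch between that conditional and the one induced by the learner's current model $(\pi,\Tx)$. The paper's proof isolates this mismatch as an explicit correction term $A$ (the difference of the two $f$-terms) and your reduction is the same decomposition in different words.

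The gap is in how you dispose of the mismatch. You propose generic perturbation bounds (Lipschitzness of $\Tx$, a stationary-distribution perturbation estimate for $g_i$, local Lipschitzness of the Bayes map and of $D_f$) and conclude that the two divergences ``agree up to an error of order $\max(K_1,K_2)(|\epsilon_1|+|\epsilon_2|)$,'' which you then call negligible. That does not prove the stated equality, and it is too weak for the purpose the lemma serves: in Theorem~\ref{thm. convergence} the per-iteration decrease of the loss produced by the M-step is itself first order in the parameter displacement, so an $O(\delta)$ discrepancy between the two objectives could swamp the decrease and the convergence argument would collapse. The substantive content of the lemma is that the correction term vanishes \emph{to first order}, i.e.\ is $O(\delta^2)$, and this comes from a specific cancellation rather than from smallness alone: after Taylor-expanding $f$ around the iterate, the only surviving first-order contribution is proportional to $\sum_{\PrX{i}} \tfrac{d}{d\tilde\pi}\, p(\PrX{i}\mid \obs_i,a_i,x_i,\tilde\pi)$, and this is identically zero because the conditional is normalized, $\sum_{\PrX{i}} p(\PrX{i}\mid \obs_i,a_i,x_i,\tilde\pi)=1$ for every $\tilde\pi$. (The terms involving the derivative of the occupancy $\rho_{\tilde\pi}$ itself cancel between the two $f$-terms, which is why the stationary-distribution perturbation estimate you flag as the main obstacle is not actually needed.) Without exhibiting this normalization-driven cancellation, your argument establishes only approximate agreement at the wrong order and does not yield the lemma.
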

The proof of Lemma \ref{thm. oax to oaxx} is based on the first-order approximation of $f$ and provided in the Appendix. This implies that in reasonable conditions (e.g., smoothness of neural networks and compactness of parameter space), if we update $\PiParam$ only by a small amount via Eq. \ref{eq. pi objective}, our objective function, Eq. \ref{eq. original objective}, also decreases.

Then, along with \textit{Lemma 2.3} from \cite{seo2024idil}, we can derive the following theorem for the convergence of \ouralg.
\begin{theorem} 
\label{thm. convergence}
Let $\Loss{k}\doteq\Dfam{i}{E}{i}{k}$, and $p_E(o_i, a_i)$ denote the stationary distributions of $o, a$ computed from the expert demonstrations $\DemoTeam$. If (1) the conditions of lemma \ref{thm. oax to oaxx} is satisfied and (2) $\chkRho_E^k\approx\grvRho_E^k\approx p(x_i, \PrX{i}|o_i, a_i, \JoAM^k) p_E(o_i, a_i)$, then $\Loss{k+1}\leq \Loss{k}$.
\end{theorem}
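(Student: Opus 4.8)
The plan is to establish the monotone decrease in two stages: first show that the M-step cannot increase the $f$-divergence measured against the \emph{frozen} iteration-$k$ expert estimate, and then show that re-estimating the expert occupancy in the E-step (passing from $\rho_E^k$ to $\rho_E^{k+1}$) does not undo this, using a marginalization argument. Throughout I fix an agent $i$, write $\calN^k=(\pi^k,\Tx^k)$ for the model at the start of iteration $k$, let $\rho_i^k$ denote the \AMoccu occupancy measure of $(\calN^k_i,E_{-i})$ with $(o_i,a_i)$-marginal $q_i^k$, so that $\Loss{k}=D_f(\rho_i^k\,\|\,\rho_E^k)$. The key preliminary observation is that, under assumption (2), both $\rho_E^k$ and the learner's $\rho_i^k$ factor as a common conditional $p(x_i,x^-_i\mid o_i,a_i,\JoAM^k)$ over $(x_i,x^-_i)$ times an $(o_i,a_i)$-marginal ($p_E$ for the former by (2), $q_i^k$ for the latter by definition of a marginal); substituting into $D_f$, the common conditional cancels inside $f$ and sums to one, so $\Loss{k}=D_f(q_i^k\,\|\,p_E)$, and identically $\Loss{k+1}=D_f(q_i^{k+1}\,\|\,p_E)$.

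\textbf{M-step.} In iteration $k$ the M-step takes a small update of $\pi_i$ via Eq. \ref{eq. pi objective} and then of $\Tx_i$ via Eq. \ref{eq. tx objective} (small, as required by Lemma \ref{thm. oax to oaxx}). By Lemma \ref{thm. oax to oaxx}, for $\pi_i$ near $\pi_i^k$ the divergence in Eq. \ref{eq. pi objective} equals $D_f(\rho_{\pi_i,\Tx_i^k}\,\|\,\grvRho_E^k)$; since a sufficiently small gradient step on Eq. \ref{eq. pi objective} decreases it (the ``sufficiently small'' hypothesis of Lemma \ref{thm. oax to oaxx} is exactly what keeps the first-order error below the decrease), and $\grvRho_E^k\approx\rho_E^k$ by (2), I obtain $D_f(\rho_{\pi_i^{k+1},\Tx_i^k}\,\|\,\rho_E^k)\le D_f(\rho_i^k\,\|\,\rho_E^k)=\Loss{k}$. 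Applying the counterpart of Lemma \ref{thm. oax to oaxx} for the \TXoccu occupancy measure --- obtained by adapting Lemma 2.3 of \cite{seo2024idil} to the partially observable AMM, just as Lemma \ref{thm. oax to oaxx} handles the \PIoccu occupancy measure --- to the subsequent small step in $\Tx_i$, together with $\chkRho_E^k\approx\rho_E^k$ from (2), then gives $D_f(\rho_i^{k+1}\,\|\,\rho_E^k)\le D_f(\rho_{\pi_i^{k+1},\Tx_i^k}\,\|\,\rho_E^k)\le\Loss{k}$, where $\rho_i^{k+1}$ is the \AMoccu occupancy measure of $(\calN^{k+1}_i,E_{-i})$.

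\textbf{E-step and conclusion.} It remains to replace $\rho_E^k$ by $\rho_E^{k+1}$. Marginalizing out $(x_i,x^-_i)$ is a stochastic map, under which an $f$-divergence is non-increasing; since $\sum_{x_i,x^-_i}\rho_E^k=p_E(o_i,a_i)$ (by (2)) and $\sum_{x_i,x^-_i}\rho_i^{k+1}=q_i^{k+1}(o_i,a_i)$, this yields $D_f(q_i^{k+1}\,\|\,p_E)\le D_f(\rho_i^{k+1}\,\|\,\rho_E^k)$. Chaining with the two identities from the first paragraph and the M-step bound, $\Loss{k+1}=D_f(q_i^{k+1}\,\|\,p_E)\le D_f(\rho_i^{k+1}\,\|\,\rho_E^k)\le\Loss{k}$, as claimed.

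\textbf{Expected main obstacle.} I expect the M-step argument to be the delicate part: Lemma \ref{thm. oax to oaxx} only matches the two divergences up to the first-order expansion of $f$, and the per-step decrease is itself only first-order, so one must verify that the step size can be chosen small enough that these errors are genuinely dominated by the decrease, while still composing the two sub-updates (freeze $\Tx_i$, move $\pi_i$; then the reverse) without reintroducing an increase. A secondary point needing care is the precise sense in which assumption (2) forces the learner's occupancy and the expert estimate to share the same $(x_i,x^-_i\mid o_i,a_i)$-conditional, which is what makes the cancellations to $D_f(q_i^k\,\|\,p_E)$ exact; relatedly, since assumption (2) is only approximate, the conclusion should be read as holding up to the corresponding error.
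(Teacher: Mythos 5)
Your proposal is correct and follows essentially the same route as the paper's proof: the identity $\Loss{k}=D_f(q_i^k\|p_E)$ via the shared conditional $p(x_i,\PrX{i}|o_i,a_i,\JoAM^k)$ is exactly the paper's ``equality because E-step,'' your marginalization/data-processing step is the paper's Jensen inequality from convexity of $f$, and your M-step bound via Lemma \ref{thm. oax to oaxx} and the \TXoccu counterpart (Lemma 2.3 of \citet{seo2024idil}) matches the paper's corollary. The caveats you flag (first-order nature of the lemmas, the exactness of the shared-conditional cancellation) are present, and equally implicit, in the paper's argument.
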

The proof is built on the convexity of the $f$ and the minimization of $f$-divergence. Its details are provided in the Appendix. With Thm. \ref{thm. convergence}, since the objective function, $\Loss{}$, is always positive, it will eventually converge to a local optimum. Note that without any information regarding the rules or labels of the subtasks, multiple solutions can exist to exhibit the expert demonstrations $\DemoTeam$. Our approach allows for semi-supervision by incorporating expert subtask labels in the E-step. As the experimental results demonstrate, semi-supervision can help disambiguate the models, finding one closer to the actual expert team behavior.

\section{Experiments}

Through numerical experiments, we now assess \ouralg's performance against MAIL baselines across multiple domains.

\subsection{Experimental Setup}

\subsubsection{Domains}
We evaluate \ouralg across multiple domains with varying complexity, including the \simplemulti-$n$ suite, \movers, \rescue, and the \smactwo suite. 
The key characteristics of our experimental domains are presented in Table \ref{table. domains}. Our domains include both continuous and discrete observation and action spaces, with varying numbers of agents (2-5) who are either subtask-agnostic or subtask-driven.
Please refer to Figs.~\ref{fig: individual paths}--\ref{fig: workplace} for illustrations of \simplemulti-$n$ domains. Remaining domains are illustrated in Fig.~\ref{fig: domains}.

\begin{table}[t]
\caption{Key Characteristics of Experimental Domains. ``Subtask'' refers to whether agents are subtask-driven. ``Dim'' denotes the dimension or cardinality of a space.}
\newcommand{\mcb}[2]{\multicolumn{#1}{c}{\bf #2}}

\label{table. domains}
\begin{center}
\begin{tabular}{lccccccc}  \toprule 
                & \mcb{2}{Experts}   & \mcb{2}{\shortstack{Observation\\Space}} &\mcb{2}{\shortstack{Action\\Space}} \\ 
                \cmidrule(lr){2-3}       \cmidrule(lr){4-5}            \cmidrule(lr){6-7}
\textbf{Domain} & \# agents & Subtask  & Type        & $Dim$       & Type       & $Dim$  \\ \midrule
\simplemultish-2  & 2        & Yes                  & Cont.  & 6           & Cont. & 2                  \\
\simplemultish-3  & 2        & Yes  & Cont.  & 6           & Cont. & 2                 \\
\movers         & 2     & Yes  & Disc.    & 45          & Disc.   & 6                 \\
\rescue         & 2   & Yes  & Disc.    & 56          & Disc.   & 6                  \\
\protosssh        & 5   & No   & Cont.  & 90          & Disc.   & 11                  \\
\terransh         & 5   & No   & Cont.  & 80          & Disc.   & 11                \\
\bottomrule      
\end{tabular}
\end{center}
\end{table}

The \simplemulti-$n$ simulates the motivating example introduced in Fig. \ref{fig: workplace} in continuous observation and action spaces. \movers and \rescue are collaborative team tasks in partially observable environments introduced by \cite{seo2023automated}, considering only discrete states and actions. These domains are designed to admit multiple near-optimal strategies, allowing agents to exhibit multimodal behaviors. We create synthetic agents exhibiting hierarchical behavior and generate 50 and 100 demonstrations for training and testing, respectively, for each domain. \smactwo is a challenging benchmark for multi-agent reinforcement learning \cite{ellis2023smacv2}. We consider two domains in this suite: \protoss and \terran, where a team of five agents is tasked with defeating five enemies. We obtain a multi-agent policy via MAPPO \cite{yu2022the} and generate 50 trajectories per domain for training. In all domains, team members must make decentralized decisions in partially observable environments. For more details, please refer to the Appendix.

\paragraph{Baselines} 
We compare our approach with Behavior Cloning (BC), \magail (\magailsh) \cite{song2018multi}, \iiql (\iiqlsh), and \maogail (\maogailsh).  BC is a supervised learning approach to learning policies, which serves as a fundamental baseline for imitation learning \cite{li2022rethinking}. \magail is a generative adversarial training-based \mail algorithm, which employs the centralized training with decentralized execution (CTDE) approach
\cite{goodfellow2014generative}.
\iiql is a naive multi-agent extension of \iql \cite{garg2021iq}, which applies \iql to each agent independently. Since this baseline also takes non-adversarial training, we can compare the effect of hierarchical structure in modeling team behavior with this baseline. 
To our knowledge, no approach exists for learning hierarchical policies in complex multi-agent domains. Thus, we present \maogail as a baseline, which learns a hierarchical policy of each agent separately via \ogail \cite{jing2021adversarial}. For discrete domains, \movers and \rescue, we also report the performance of \btil \cite{seo2022semi}.

\begin{table}[t]
% \small
\caption{Average cumulative task reward with multimodal team behavior. \maogailsh-s and \ouralgsh-s represent the results with 20\% supervision.}
\label{table. task reward results}
\begin{center}
\newcommand{\mcb}[1]{\multicolumn{1}{c}{\bf #1}}
\newcommand{\MSb}[2]{\textbf{\MS{#1}{#2}}}
\newcommand{\mcbl}[1]{\multicolumn{1}{c:}{\bf #1}}
\newcommand{\mr}[1]{\multirow{2}{*}{#1}}
\setlength\tabcolsep{3pt} % Column spacing adjustment (default: 6)
\begin{tabular}{ccccc} \toprule
\mcb{Method}  & \mcb{\simplemultish-$2$} & \mcb{\simplemultish-$3$} & \mcb{movers}   & \mcb{rescue}   \\ 
\midrule
\mcb{Expert}  & \MS{24.1}{3.8}           & \MS{28.7}{4.6}           & \MS{-99}{32}  & \MS{4.6}{2.0}  \\
\midrule
\mcb{BC}      & \MS{9.6}{3.2}            & \MS{11.8}{1.5}           & \MS{-150}{0}  & \MS{0.0}{0.0}  \\
\mcb{\magailsh}  & \MS{6.9}{2.0}         & \MS{10.4}{1.2}           & \MS{-150}{0}  & \MS{4.5}{0.5}  \\
\mcb{\iiqlsh}    & \MS{14.7}{0.7}        & \MSb{27.8}{1.6}          & \MSb{-107}{9} & \MSb{5.6}{0.2} \\
\mcb{\maogailsh} & \MS{6.1}{1.3}         & \MS{7.4}{2.0}            & \MS{-150}{0}  & \MS{3.6}{0.6}  \\
\mcb{\ouralgsh}  & \MSb{16.8}{5.9}       & \MS{27.0}{1.0}           & \MS{-108}{7} & \MS{5.4}{0.4} \\
\midrule
\mcb{\btil}        & -      & -           & \MS{-150}{0}  & \MS{0.6}{0.4}  \\
\mcb{\maogailsh-s} & \MS{11.8}{1.3}      & \MS{10.2}{2.0}           & \MS{-150}{0}  & \MS{3.8}{0.7}  \\
\mcb{\ouralgsh-s} & \MSb{21.6}{1.7}      & \MSb{27.3}{1.3}          & \MSb{-99}{14} & \MSb{4.9}{0.1} \\
\bottomrule
\end{tabular}
\end{center}
\end{table}

\begin{table}[t]
% \small
\caption{Average cumulative task reward and the rate of wins in \smactwo domains.}
\label{table. smacv2 results}
\begin{center}
\newcommand{\mcb}[1]{\multicolumn{1}{c}{\bf #1}}
\newcommand{\MSb}[2]{\textbf{\MS{#1}{#2}}}
\newcommand{\mcbl}[1]{\multicolumn{1}{c:}{\bf #1}}
\newcommand{\mr}[1]{\multirow{2}{*}{#1}}
\setlength\tabcolsep{3pt} % Column spacing adjustment (default: 6)
\begin{tabular}{ccccc} \toprule
\multirow{2}{*}{\bf Method}  & \multicolumn{2}{c}{\bf \protosssh} & \multicolumn{2}{c}{\bf \terransh} \\ 
\cmidrule(lr){2-3} \cmidrule(lr){4-5}
 & \mcb{Reward} & \mcb{Wins} & \mcb{Reward} & \mcb{Wins} \\ 
\midrule
\mcb{Expert}   & \MS{18.0}{4.8}  & \MS{0.55}{0.50}  & \MS{11.9}{3.2}  & \MS{0.60}{0.49}  \\
\mcb{BC}       & \MS{6.5}{0.1}   & \MS{0.17}{0.06}  & \MS{4.7}{1.8}   & \MS{0.07}{0.06}  \\
\mcb{\magailsh}  & \MS{7.8}{1.0}  & \MS{0.27}{0.06}  & \MS{7.2}{1.3}   & \MS{0.27}{0.6}   \\
\mcb{\iiqlsh}    & \MS{8.4}{0.7}  & \MSb{0.47}{0.06} & \MS{7.8}{0.7}   & \MSb{0.47}{0.06} \\
\mcb{\maogailsh} & \MS{6.8}{1.0}  & \MS{0.13}{0.06}  & \MS{6.2}{1.3}   & \MS{0.23}{0.06}  \\
\mcb{\ouralgsh}  & \MSb{9.9}{1.2} & \MSb{0.47}{0.06} & \MSb{8.2}{1.3}  & \MSb{0.47}{0.06} \\
\bottomrule
\end{tabular}
\end{center}
\end{table}

\paragraph{Metrics} 
Similar to other imitation learning algorithms \cite{song2018multi, garg2021iq}, we use the cumulative task reward to evaluate the algorithms. In \smactwo domains, we also consider the win rate in battles. If the learned multi-agent policy aligns with the expert team behavior, it will achieve a task reward similar to the expert's. However, a high task reward does not necessarily indicate alignment with the expert team model, as the algorithms might learn only one optimal policy, resulting in unimodal rather than multimodal behavior. Therefore, we also measure the accuracy of subtask inference. The closer the learned model is to the expert model, the more accurately it can predict expert subtasks from their demonstrations. We use the MAP estimation (the E-step of Alg. \ref{alg. learner}) for subtask inference.

\subsection{Results}
\label{sec. exp results}

\subsubsection{\ouralgsh achieves expert-level task performance.} 
Table \ref{table. task reward results} shows the task rewards averaged over three trials for \simplemulti-$n$ (\simplemultish-$n$), \movers, and \rescue. We observe that \iql-based approaches, \iiqlsh and \ouralgsh, generally perform better than approaches based on generative adversarial imitation learning. Between \magailsh and \maogailsh, \magailsh performed better, likely because \magailsh additionally utilizes other agents' information during training. In contrast, \maogailsh and \ouralgsh take only individual observation-action trajectories and do not utilize any other information that might break the partial observability condition even during the training phase.

While \ouralgsh outperformed \iiqlsh in \simplemulti-$2$, it performed on par in other domains. We believe this is due to the domains being too simple, allowing subtask-agnostic approaches to extract one optimal solution from demonstrations. In more complex domains, we could observe an improvement in task performance due to the hierarchical structure of our agent model. As shown in Table \ref{table. smacv2 results}, \ouralgsh achieved the highest task reward and win rate in both the \smactwo domains: \protoss and \terran. We want to highlight that even though \iiqlsh often achieves high task rewards, it can neither learn multimodal behavior nor utilize semi-supervision. On the other hand, \ouralgsh can improve its performance by augmenting subtask labels. As shown in Table \ref{table. task reward results}, \ouralgsh achieved a task reward similar to the expert task reward in all domains with 20\% supervision.

\begin{table}[t]
\caption{Accuracy of Subtask Inference. We represents the $1$-st agent of the \simplemultish-$2$ domain as \simplemultish-$2$-1, and similarly for other agents.}%
\label{table. inference}
\newcommand{\mcb}[1]{\multicolumn{1}{c}{\bf #1}}
\newcommand{\MSb}[2]{\textbf{\MS{#1}{#2}}}
\newcommand{\mcbl}[1]{\multicolumn{1}{c:}{\bf #1}}
\newcommand{\mr}[1]{\multirow{2}{*}{#1}}
\begin{tabular}{l@{}c@{}c@{}c@{}c} \toprule
\mcb{Agent} & \mcb{Random} & \mcb{\btil} & \mcb{\maogailsh-s} & \mcb{\ouralgsh-s}\\
\midrule
\simplemultish-$2$-1 & $\approx0.50$ & - & \MS{0.61}{0.09} & \MSb{0.75}{0.04}  \\
\simplemultish-$2$-2 & $\approx0.50$ & - & \MS{0.63}{0.17} & \MSb{0.75}{0.07}  \\
\simplemultish-$3$-1 & $\approx0.33$ & - & \MS{0.49}{0.04}   & \MSb{0.78}{0.07}  \\
\simplemultish-$3$-2 & $\approx0.33$ & - & \MS{0.68}{0.06}   & \MSb{0.72}{0.08}  \\
\movers-1 & $\approx0.25$ & \MSb{0.90}{0.01} & \MS{0.35}{0.15} & \MS{0.78}{0.02} \\
\movers-2 & $\approx0.25$ & \MSb{0.91}{0.01} & \MS{0.46}{0.07} & \MS{0.78}{0.07} \\
\rescue-1 & $\approx0.25$ & \MS{0.53}{0.04} & \MS{0.31}{0.06}  & \MSb{0.61}{0.08} \\
\rescue-2 & $\approx0.25$ & \MSb{0.62}{0.01} & \MS{0.25}{0.12}  & \MS{0.57}{0.03} \\
\bottomrule
\end{tabular}
\end{table}

\begin{figure*}[t]
  \def\subfwid{.45}
  \def\figwid{.3}
  \centering
  \begin{subfigure}[t]{\subfwid\linewidth}
      \centering
      \includegraphics[width=\figwid\linewidth, frame]{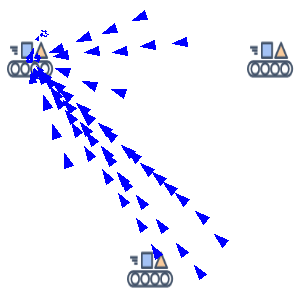}\hspace{0.5ex}
      \includegraphics[width=\figwid\linewidth, frame]{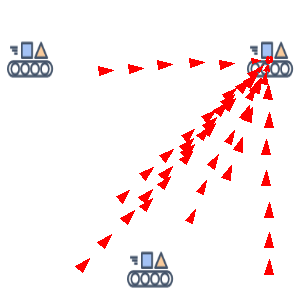}\hspace{0.5ex}
      \includegraphics[width=\figwid\linewidth, frame]{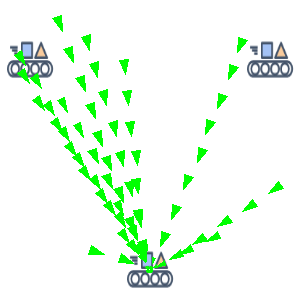}
      \caption{Expert (Agent 1)}
      \label{fig: expert a1 visualization}
  \end{subfigure}
  \hspace{1ex}
  \begin{subfigure}[t]{\subfwid\linewidth}
      \centering
      \includegraphics[width=\figwid\linewidth, frame]{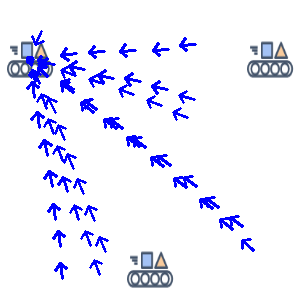}\hspace{0.5ex}
      \includegraphics[width=\figwid\linewidth, frame]{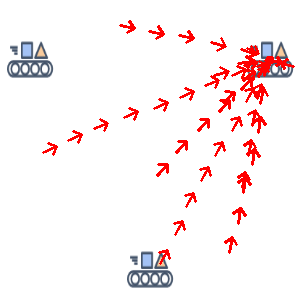}\hspace{0.5ex}
      \includegraphics[width=\figwid\linewidth, frame]{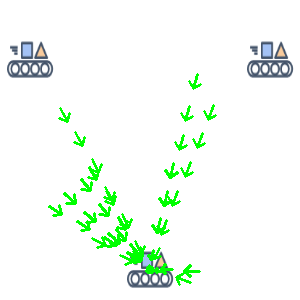}
      \caption{Expert (Agent 2)}
  \end{subfigure}
  \par\bigskip 
  \begin{subfigure}[t]{\subfwid\linewidth}
      \centering
      \includegraphics[width=\figwid\linewidth, frame]{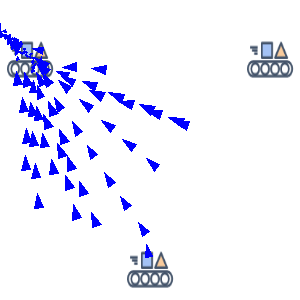}\hspace{0.5ex}
      \includegraphics[width=\figwid\linewidth, frame]{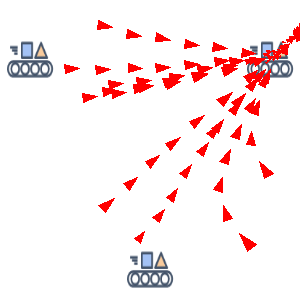}\hspace{0.5ex}
      \includegraphics[width=\figwid\linewidth, frame]{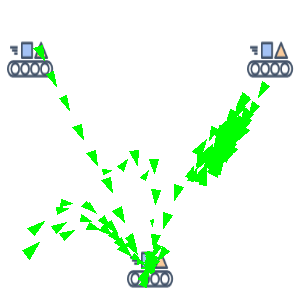}
      \caption{\ouralg (Agent 1)}
  \end{subfigure}
  \hspace{1ex}
  \begin{subfigure}[t]{\subfwid\linewidth}
      \centering
      \includegraphics[width=\figwid\linewidth, frame]{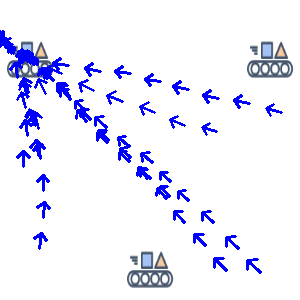}\hspace{0.5ex}
      \includegraphics[width=\figwid\linewidth, frame]{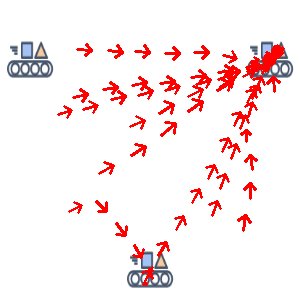}\hspace{0.5ex}
      \includegraphics[width=\figwid\linewidth, frame]{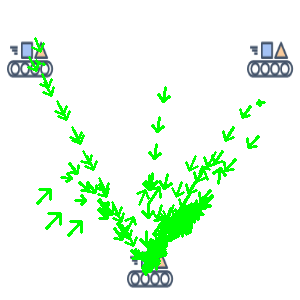}
      \caption{\ouralg (Agent 2)}
  \end{subfigure}
  \par\bigskip 
  \begin{subfigure}[t]{\subfwid\linewidth}
      \centering
      \includegraphics[width=\figwid\linewidth, frame]{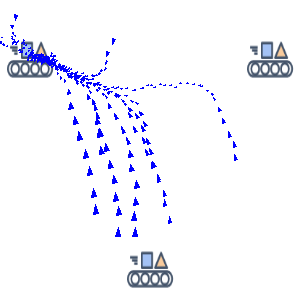}\hspace{0.5ex}
      \includegraphics[width=\figwid\linewidth, frame]{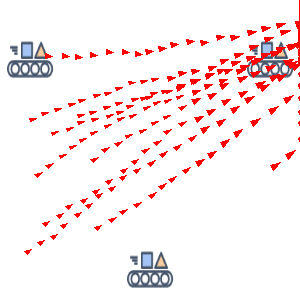}\hspace{0.5ex}
      \includegraphics[width=\figwid\linewidth, frame]{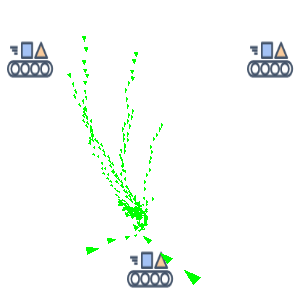}
      \caption{\maogail (Agent 1)}
  \end{subfigure}
  \hspace{1ex}
  \begin{subfigure}[t]{\subfwid\linewidth}
      \centering
      \includegraphics[width=\figwid\linewidth, frame]{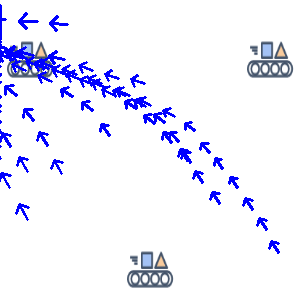}\hspace{0.5ex}
      \includegraphics[width=\figwid\linewidth, frame]{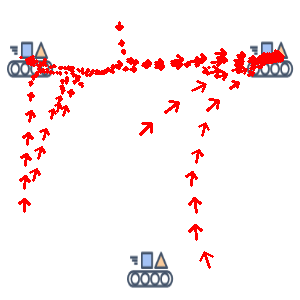}\hspace{0.5ex}
      \includegraphics[width=\figwid\linewidth, frame]{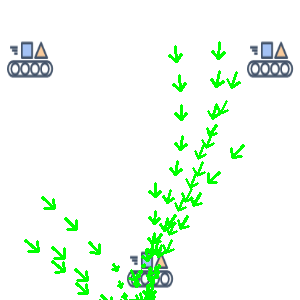}
      \caption{\maogail (Agent 2)}
  \end{subfigure}
  \captionsetup{subrefformat=parens}
  \caption{Visualization of individual \simplemulti-$3$ trajectories generated by the expert and learned models conditioned on a fixed subtask. The directions of the triangles and arrows represent the actions of agents at each position. The three colors represent the three fixed subtasks. Both learned models (\ouralg and \maogail) are trained with 20 \% supervision of subtask labels.}
  \label{fig: individual paths}
\end{figure*}

\subsubsection{\ouralgsh accurately learns multimodal team behavior.} As mentioned in Sec. \ref{sec. problem formulation}, our goal is to learn the different team behaviors generated by expert teams rather than a unimodal team policy. Additionally, in human-AI teaming applications, an AI agent must accurately interpret its human teammate's high-level plan. To achieve this, it is essential to learn a model that exhibits hierarchical behavior aligned with expert team members.
Table. \ref{table. inference} presents the accuracy of subtask inference computed with 20\%-supervision models. Note that without any supervision, we cannot associate the learned latent values with the actual subtasks. In all cases, \ouralgsh outperforms \maogailsh and the random baselines.\footnote{Note that other DNN-based baselines cannot be utilized for subtask inference.} 

\subsubsection{\ouralgsh outperforms \btil in more complex tasks.}
As demonstrated in Table \ref{table. task reward results}, \ouralg outperformed \btil in terms of task reward. While \btil's overall performance was generally below that of online methods such as \maogail and \ouralg, it performed slightly better than BC. We attribute this to \btil's offline nature, which makes it prone to compounding errors. This implies that if a \btil agent encounters a state that was not present in the training dataset, it struggles to select the appropriate action, as it has not learned anything about that state.
On the other hand, \btil's subtask inference performance was on par with, or even superior to, \ouralg. As shown in Table \ref{table. inference}, \btil achieved approximately 0.9 accuracy in subtask inference for \movers. However, we emphasize that this level of performance is only feasible in small domains, as \btil cannot scale up to domains with larger state spaces.

\subsubsection{Team models learned using DTIL generated behaviors that are qualitatively similar to those generated by expert teams.}
To interpret the learned behavior associated with each subtask ($x$), we visualized the paths generated by ten different seeds (seed=0:9) for each model in Figure \ref{fig: individual paths}. 
For this visualization, we intentionally set the part of an agent’s observation related to other agents to zero, eliminating their influence on the behavior of the agent being inspected. 
Figure \ref{fig: individual paths} shows that \ouralg's subtask-driven behavior closely resembles the expert's when trained with partial supervision of subtask labels. In contrast, \maogail trajectories tend to be noisy and unfocused on a specific subgoal, even with a fixed $x$. 
We believe the superior performance of \ouralg stems from its factored structure, in which it learns separate Q-functions for $\pi$ and $\Tx$. Given that Q-functions can be interpreted as reward functions \cite{garg2021iq}, our approach effectively learns a hierarchical reward corresponding to each level of the policy. However, since \maogail does not learn separate Q-functions, it is difficult to determine whether it truly captures a hierarchical policy structure or merely optimizes the joint policy $\pi(a, x|o, x^-)$.

\section{Conclusion}
This work introduces \ouralg, an algorithm for learning generative models of team behavior from heterogeneous demonstrations. Experiments show that \ouralg outperforms state-of-the-art multi-agent imitation learning baselines and captures expert team behavior across six diverse teamwork domains. Additionally, \ouralg can generate a wide range of expert team behaviors.
\ouralg also motivates future research directions. First, \ouralg assumes a known, finite set of subtasks, though real-world subtasks may be difficult to define a prior or represent as scalars. Future MAIL methods should explore more expressive hierarchical representations. Second, by enabling generative models of team behavior, \ouralg can enable novel human-AI teaming applications, such as AI-enabled team coaching~\cite{seo2021towards, seo2025socratic} and end-user programming of multi-agent systems~\cite{ajaykumar2021survey, stegner2024understanding}. We invite developers of these and other impactful applications to utilize \ouralg and make additional details available at \url{http://tiny.cc/dtil-appendix}

% acknowledgement

% The acknowledgments are automatically included only in the final and preprint versions of the paper.

\begin{acks}
This research was supported in part by NSF award $\#2205454$, ARO CA\# W911NF-20-2-0214, and Rice University funds.
\end{acks}

%===============================================================================
% references
%%% -*-BibTeX-*-
%%% Do NOT edit. File created by BibTeX with style
%%% ACM-Reference-Format-Journals [18-Jan-2012].

% appendix
\ifarxiv
\appendix
\onecolumn
\section{Proofs and Derivations}
\subsection{Theorem \ref{thm. bijection}}
\newcommand{\VWoccu}{\textsf{\Sam\Aam}}
\newcommand{\OMfull}{\rho(\obs, s, a, x, \PrX{}) }
\newcommand{\pimerged}{\tilde{\pi}_i(a, x|\obs, \PrX{})}
\newcommand{\PiParamk}{\PiParam^k}
\newcommand{\TxParamk}{\TxParam^k}
\begin{proof}
Given a task model $\calM$ and joint agent models $\calN_{1:n}$, we define a stationary distribution over the joint variables $(\obs, s, a, x, \PrX{})$ as 
\begin{align*}
    \OMfull = \sumP{\TmO{}{t}\myeq\obs, \TmS{t}\myeq s, \TmA{}{t}\myeq a, \TmX{}{t}\myeq x, \TmX{}{t-1}\myeq\PrX{}|\calM, \JoAM}.
\end{align*}
Let $\Sam \doteq(\obs, \PrX{})$ and $\Aam \doteq(x, a)$ denote the subtask-augmented state and action, respectively. From $\calM$ and $\calN_{1:n}$, we define a stationary policy $\tilde{\pi}_i$, a stationary distribution of state transition $\tilde{T}_i$, and an initial distribution $\tilde{\mu}_i$ for each agent $i$ as follows:
\begin{align*}
    \tilde{\pi}_i(\Aam_i|\Sam_i) &\doteq \pi_i(a_i|\obs_i, x_i)\Tx_i(x_i|\obs_i, \PrX{i}) \\
    \tilde{T}_i(\Sam'_i|\Sam_i, \Aam_i)&\doteq\sum_{s, s', \OtA{i}}O_i(\obs_i'|s', a)T(s'|s, a)p(s, \OtA{i}|\obs_i, \PrX{i}, x_i, a_i) \\
    \tilde{\mu}_{0, i}(\Sam_i) &\doteq \sum_s \mu_0(s)O_i(\obs_i|s, \PrA{}\myeq\#)
\end{align*}
where $p(s, \OtA{i}|\obs_i, \PrX{i}, x_i, a_i) = \frac{\sum_{\Others{\obs}{i},\Others{x}{i}, \PrX{\Others{}{i}}} \OMfull}{\sum_{\Others{\obs}{i},\Others{x}{i}, \PrX{\Others{}{i}}, s, \Others{a}{i}} \OMfull}$. 
With $\tilde{\pi}_i,\tilde{T}_i$ and $\tilde{\mu}_{0, i}$, we can drive the following $\tilde{\pi}_i$-specific Bellman flow constraints \cite{puterman1990markov}:
\begin{align*}
    \rho_{\tilde{\pi}_i}(\Sam, \Aam) &= \sum_{t=0}^\infty \gamma^t p(\Sam^t\myeq \Sam, \Aam^t\myeq \Aam) \\
    &= p(\Sam^0\myeq \Sam, \Aam^0\myeq \Aam) + \gamma\sum_{t=0}^\infty \gamma^t \sum_{\Sam', \Aam'}p(\Sam^{t+1}\myeq \Sam, \Aam^{t+1}\myeq \Aam, \Sam^t\myeq \Sam', \Aam^t\myeq \Aam') \\
    &= \tilde{\pi}_i( \Aam|\Sam) \tilde{\mu}_{0,i} (\Sam) + \gamma\sum_{\Sam', \Aam'}\sum_{t=0}^\infty \gamma^t \tilde{\pi}_i(\Aam|\Sam) \tilde{T}_i(\Sam|\Sam', \Aam') p(\Sam^t\myeq \Sam', \Aam^t\myeq \Aam') \\
    &= \tilde{\pi}_i( \Aam|\Sam) \tilde{\mu}_{0,i} (\Sam) + \gamma\tilde{\pi}_i(\Aam|\Sam) \sum_{\Sam', \Aam'} \tilde{T}_i(\Sam|\Sam', \Aam') \sum_{t=0}^\infty \gamma^tp(\Sam^t\myeq \Sam', \Aam^t\myeq \Aam') \\
    &= \tilde{\pi}_i( \Aam|\Sam) \left(\tilde{\mu}_{0,i} (\Sam) + \gamma \sum_{\Sam', \Aam'} \tilde{T}_i(\Sam|\Sam', \Aam') \rho_{\tilde{\pi}_i}(\Sam', \Aam') \right)\\
    \rho_{\tilde{\pi}_i}(\Sam, \Aam) &\geq 0.
\end{align*}
where the subscript $i$ is omitted from $\Sam$ and $\Aam$ for notational convienence. Then, the one-to-one correspondence between $\rho_{\tilde{\pi}_i}$ and $\tilde{\pi}_i$ can be drawn by the following theorem:
\begin{theorem}[Theorem 2 of \citet{syed2008apprenticeship}]
    Let $\rho$ satisfy the Bellman flow constraints:
    \begin{align*}
        \sum_a \rho_{sa} = \mu_s + \gamma \sum_{s',a'}\rho_{s'a'}T_{s'a's} \qquad \text{and}\qquad \rho_{sa} &\geq 0,
    \end{align*}
    and let $\pi_{sa} = \frac{\rho_{sa}}{\sum_a\rho_{sa}}$ be a stationary policy. Then, $\rho$ is the occupancy measure for $\pi$. Conversely, if $\pi$ is a stationary policy such that $\rho$ is its occupancy measure, then $\pi_{sa} = \frac{\rho_{sa}}{\sum_a \rho_{sa}}$ and $\rho$ satisfies the Bellman flow constraints.
\end{theorem}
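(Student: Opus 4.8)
The plan is to establish the two directions of the correspondence separately. The converse direction --- a stationary policy $\pi$ induces an occupancy measure that satisfies the Bellman flow constraints and from which $\pi$ is recovered by normalization --- is essentially bookkeeping on the geometric series defining $\rho^{\pi}_{sa} = \sum_{t\geq 0}\gamma^{t}P(s_t = s, a_t = a \mid \pi)$. The forward direction --- every nonnegative solution $\rho$ of the flow equations is the occupancy measure of its normalized policy --- will hinge on a uniqueness argument for a discounted linear fixed-point system.

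For the converse, I would first note that nonnegativity of $\rho^{\pi}$ is immediate. Summing over actions gives $\sum_a\rho^{\pi}_{sa} = \sum_{t\geq 0}\gamma^{t}P(s_t = s \mid \pi)$; isolating the $t=0$ term (which equals $\mu_s$), rewriting $P(s_t = s \mid \pi) = \sum_{s',a'}P(s_{t-1}=s', a_{t-1}=a' \mid \pi)\,T_{s'a's}$ for $t\geq 1$, and shifting the summation index then reproduces the flow equation $\sum_a\rho^{\pi}_{sa} = \mu_s + \gamma\sum_{s',a'}\rho^{\pi}_{s'a'}T_{s'a's}$. Since $\pi$ is stationary, $P(a_t = a \mid s_t = s) = \pi_{sa}$ for every $t$, so $\rho^{\pi}_{sa} = \pi_{sa}\sum_{a'}\rho^{\pi}_{sa'}$, that is, $\pi_{sa} = \rho^{\pi}_{sa}/\sum_{a'}\rho^{\pi}_{sa'}$.

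For the forward direction, let $\rho$ satisfy the constraints, set $\bar\rho_s := \sum_a\rho_{sa}$ and $\pi_{sa} := \rho_{sa}/\bar\rho_s$ (fixing any convention on states with $\bar\rho_s = 0$), and let $P^{\pi}_{s's} := \sum_{a'}\pi_{s'a'}T_{s'a's}$ denote the state-transition kernel induced by $\pi$. Substituting $\rho_{sa} = \bar\rho_s\pi_{sa}$ into the flow equations collapses them to the linear system $\bar\rho = \mu + \gamma(P^{\pi})^{\top}\bar\rho$. Because $P^{\pi}$ is row-stochastic its spectral radius is $1$, so for $\gamma\in[0,1)$ the operator $I - \gamma(P^{\pi})^{\top}$ is invertible and this system has a unique solution. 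The converse direction already shows that the state-occupancy vector $d_s := \sum_{t\geq 0}\gamma^{t}P(s_t = s \mid \pi)$ of $\pi$ solves that same system, hence $\bar\rho_s = d_s$ for all $s$; therefore $\rho_{sa} = \bar\rho_s\pi_{sa} = d_s\,P(a_t = a \mid s_t = s) = \sum_{t\geq 0}\gamma^{t}P(s_t = s, a_t = a \mid \pi) = \rho^{\pi}_{sa}$, so $\rho$ is the occupancy measure of $\pi$. On states with $\bar\rho_s = 0$ one checks $d_s = 0$ as well, so both sides vanish regardless of how $\pi$ was completed there.

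The main obstacle is the uniqueness step in the forward direction: I must argue carefully that $\bar\rho = \mu + \gamma(P^{\pi})^{\top}\bar\rho$ admits only one solution, which is exactly where the hypothesis $\gamma < 1$ is indispensable (without discounting, the normalized policy need not reproduce $\rho$). The remainder is routine manipulation of the geometric series and of conditional probabilities, together with the minor technical care needed to treat states that are unreachable under $\pi$.
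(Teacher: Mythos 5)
Your proposal is correct, and the argument (deriving the flow equations from the geometric-series definition of $\rho^{\pi}$ in one direction, and in the other direction reducing to the linear system $\bar\rho = \mu + \gamma (P^{\pi})^{\top}\bar\rho$, whose invertibility for $\gamma<1$ forces $\bar\rho$ to coincide with the state-occupancy of the normalized policy) is the standard one. Note, however, that the paper does not prove this statement at all: it is imported verbatim as Theorem~2 of Syed et al.\ (2008) and used as a black box inside the proof of the paper's Theorem~1, so the only meaningful comparison is with the original source, whose proof your argument essentially reproduces, including the correct handling of states with $\sum_a \rho_{sa} = 0$.
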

Additionally, the following lemma establishes a bijection between $\tilde{\pi}(\Sam|\Aam)$ and $\calN_i=(\pi_i, \Tx_i)$:
\begin{lemma}[Lemma 3 of \citet{jing2021adversarial}]
    There is a bijection between $\pimerged$ and $(\pi_i(a|\obs, x), \Tx_i(x|\obs, \PrX{}))$, where $\pimerged =\pi_i(a|o, x)\Tx_i(x|o, \PrX{})$ and 
    \begin{align*}
    \pi_i(a|\obs, x)=\left.\frac{\pimerged}{\sum\limits_a\pimerged}\right|_{\forall \PrX{}}=\frac{\sum\limits_{\PrX{}}\pimerged}{\sum\limits_{a, \PrX{}}\pimerged}, \qquad \Tx_i(x|s, \PrX{})=\sum_a\pimerged.
    \end{align*}
\end{lemma}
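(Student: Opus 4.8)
The plan is to prove the claim by exhibiting the two maps explicitly and checking that they are mutually inverse; since both directions are given by closed-form expressions built only from marginalization and normalization, the argument reduces to a direct algebraic verification, the single conceptual point being the ``$\forall\,\PrX{}$'' independence. Throughout I fix the agent $i$ and the observation $\obs$ (suppressing both where convenient) and work pointwise in these arguments. Write $F$ for the forward map $(\pi_i,\Tx_i)\mapsto\tilde{\pi}_i$ given by the product $\tilde{\pi}_i(a,x\mid\obs,\PrX{})=\pi_i(a\mid\obs,x)\,\Tx_i(x\mid\obs,\PrX{})$, and $G$ for the backward map $\tilde{\pi}_i\mapsto(\pi_i,\Tx_i)$ defined by the two displayed formulas.

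First I would check that both maps land in the correct spaces. For $F$, non-negativity is immediate and $\sum_{a,x}\pi_i(a\mid\obs,x)\Tx_i(x\mid\obs,\PrX{})=\sum_x\Tx_i(x\mid\obs,\PrX{})\sum_a\pi_i(a\mid\obs,x)=1$, so $\tilde{\pi}_i(\cdot,\cdot\mid\obs,\PrX{})$ is a valid conditional distribution over $(a,x)$. For $G$, the quantity $\Tx_i(x\mid\obs,\PrX{})=\sum_a\tilde{\pi}_i(a,x\mid\obs,\PrX{})$ is just the $x$-marginal, hence a valid distribution over $x$; and $\pi_i(a\mid\obs,x)=\frac{\sum_{\PrX{}}\tilde{\pi}_i(a,x\mid\obs,\PrX{})}{\sum_{a,\PrX{}}\tilde{\pi}_i(a,x\mid\obs,\PrX{})}$ is non-negative and sums to one over $a$ by construction of its denominator, so it is a valid distribution over $a$ for every option $x$ whose aggregated high-level mass is positive. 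The only bookkeeping needed here is the degenerate case in which some $x$ has $\sum_{\PrX{}}\Tx_i(x\mid\obs,\PrX{})=0$; there $\pi_i(\cdot\mid\obs,x)$ is unconstrained and may be fixed arbitrarily, since such an $x$ never contributes to $\tilde{\pi}_i$, so the bijection is understood on the support of reachable options.

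Next I would verify $G\circ F=\mathrm{id}$. Substituting the product form into $G$, the marginal recovers $\sum_a\tilde{\pi}_i(a,x\mid\obs,\PrX{})=\Tx_i(x\mid\obs,\PrX{})$, while $\frac{\sum_{\PrX{}}\pi_i(a\mid\obs,x)\Tx_i(x\mid\obs,\PrX{})}{\sum_{a,\PrX{}}\pi_i(a\mid\obs,x)\Tx_i(x\mid\obs,\PrX{})}=\pi_i(a\mid\obs,x)$, since the common factor $\sum_{\PrX{}}\Tx_i(x\mid\obs,\PrX{})$ cancels and $\sum_a\pi_i(a\mid\obs,x)=1$. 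This same computation simultaneously yields the ``$\forall\,\PrX{}$'' identity, because $\frac{\tilde{\pi}_i(a,x\mid\obs,\PrX{})}{\sum_a\tilde{\pi}_i(a,x\mid\obs,\PrX{})}=\frac{\pi_i(a\mid\obs,x)\Tx_i(x\mid\obs,\PrX{})}{\Tx_i(x\mid\obs,\PrX{})}=\pi_i(a\mid\obs,x)$ is manifestly independent of $\PrX{}$ and therefore coincides with the aggregated expression---this $\PrX{}$-invariance is the structural content of the lemma, namely that in a product-form option policy the per-option action conditional does not depend on the previous option. Finally, for $F\circ G=\mathrm{id}$ on product-form inputs, since $G$ returns exactly the factors out of which $\tilde{\pi}_i$ was built, forming their product reproduces $\tilde{\pi}_i$; more explicitly, inserting $\tilde{\pi}_i(a,x\mid\obs,\PrX{})=\pi_i(a\mid\obs,x)\Tx_i(x\mid\obs,\PrX{})$ into $\pi_i(a\mid\obs,x)\cdot\sum_a\tilde{\pi}_i(a,x\mid\obs,\PrX{})$ collapses back to $\tilde{\pi}_i(a,x\mid\obs,\PrX{})$.

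I expect the main obstacle to be presentational rather than mathematical: the substantive step is recognizing that the factorization is recoverable precisely because $\tilde{\pi}_i/\sum_a\tilde{\pi}_i$ is $\PrX{}$-invariant for product-form policies, so the real care lies in pinning down the domain of the bijection (product-form merged policies, restricted to the support of reachable options) and in handling the normalization and division-by-zero edge case cleanly; the remaining identities are one-line cancellations.
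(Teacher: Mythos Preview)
Your proof is correct. The paper does not supply its own proof of this lemma; it is quoted as Lemma~3 of Jing et al.\ (2021) and invoked as a black-box step inside the proof of the bijection theorem (Theorem~5.1). Your direct verification---exhibiting the forward and backward maps and checking $G\circ F=\mathrm{id}$ and $F\circ G=\mathrm{id}$, with the key structural observation that $\tilde{\pi}_i/\sum_a\tilde{\pi}_i$ is independent of the previous option $x^-$ for product-form merged policies---is the standard and natural argument for such a factorization lemma. Your attention to the zero-mass edge case and the domain restriction to product-form $\tilde{\pi}_i$ is appropriate and in fact more explicit than what the original reference provides.
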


Thus, since $\rho_i = \rho_{\calN_i, \OtAM{i}}=\rho_{\tilde{\pi}_i}$, a one-to-one correspondence exists between $\calN_i$ and $\rho_i$, where 
\begin{align*}
    \pi_i(a|\obs, x) = \frac{\sum_{\PrX{}} \OMam{i}{} }{\sum_{\PrX{},a} \OMam{i}{}}, \qquad \Tx_i(x|\obs, \PrX{}) = \frac{\sum_{a}\OMam{i}{}}{\sum_{a,x}\OMam{i}{}}.
\end{align*}

\end{proof}

\subsection{Equation \ref{eq. actual objective}: factored distribution matching objective}
% \subsection{Prop. \ref{thm. factorization}}
We first extends \textit{Proposition 2.1} of \cite{seo2024idil} to the multi-agent, partially-observable setting:
\begin{lemma}
\label{thm. factorization} Given a multi-agent task model $\calM$, let $\JoAM=(\pi, \Tx)$ and $\JoAM'=(\pi', \Tx')$ be two joint agent models. Then, $\OMpi{\calN_i}{} = \OMpi{\calN_i'}{}$ and $\OMtx{\calN_i}{} = \OMtx{\calN_i'}{}$ for all $i$ if and only if $\OMam{\calN_i}{}=\OMam{\calN_i'}{}$ for all $i$.
\end{lemma}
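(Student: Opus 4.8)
The plan is to prove both directions of the equivalence one agent $i$ at a time, exploiting the fact that the other agents enter only through the joint trajectory distribution induced by $(\JoAM, \calM)$, on which all three occupancy measures are defined; no reasoning about $\OtAM{i}$ is needed beyond that. The ``if'' direction is immediate: directly from the definitions in Sec.~\ref{sec. theoretical grounds}, $\OMpi{\calN_i}{}$ and $\OMtx{\calN_i}{}$ are the marginals of $\OMam{\calN_i}{}$ over $\PrX{i}$ and over $a_i$ respectively (one just pushes the marginal sum inside the discounted sum over $t$), so $\OMam{\calN_i}{} = \OMam{\calN_i'}{}$ for all $i$ trivially forces the factored measures to coincide. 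The content is in the ``only if'' direction, which I would establish by showing that $\OMam{\calN_i}{}$ is in fact a deterministic function of the pair $(\OMpi{\calN_i}{}, \OMtx{\calN_i}{})$.

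The key step is a per-timestep factorization coming from the generative structure of the AMM: because the action $a_i^t$ is sampled from $\pi_i(\cdot\,|\,o_i^t, x_i^t)$ given only the current observation and current subtask, it is conditionally independent of the previous subtask $x_i^{t-1}$, so for every $t$,
\begin{align*}
&p(\TmO{i}{t}\myeq o, \TmA{i}{t}\myeq a, \TmX{i}{t}\myeq x, \TmX{i}{t-1}\myeq \PrX{i} \mid \JoAM, \calM) \\
&\hspace{8ex}= \pi_i(a\mid o, x)\; p(\TmO{i}{t}\myeq o, \TmX{i}{t}\myeq x, \TmX{i}{t-1}\myeq \PrX{i} \mid \JoAM, \calM).
\end{align*}
Since $\pi_i$ does not depend on $t$, I would multiply by $\gamma^t$ and sum over $t$ to get $\OMam{\calN_i}{}(o, a, x, \PrX{i}) = \pi_i(a\mid o, x)\,\OMtx{\calN_i}{}(o, x, \PrX{i})$.

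Next I would recover $\pi_i$ from the factored measures. Marginalizing the last identity over $\PrX{i}$ gives $\OMpi{\calN_i}{}(o, a, x) = \pi_i(a\mid o, x)\,\nu_i(o, x)$ with $\nu_i(o, x)\doteq\sum_{\PrX{i}}\OMtx{\calN_i}{}(o, x, \PrX{i})$, so wherever $\nu_i(o, x) > 0$ one solves for $\pi_i$ and substitutes back, yielding
\begin{align*}
\OMam{\calN_i}{}(o, a, x, \PrX{i}) = \frac{\OMpi{\calN_i}{}(o, a, x)\,\OMtx{\calN_i}{}(o, x, \PrX{i})}{\nu_i(o, x)},
\end{align*}
which is expressed entirely through $\OMpi{\calN_i}{}$ and $\OMtx{\calN_i}{}$; wherever $\nu_i(o, x) = 0$, the bound $\OMam{\calN_i}{}(o, a, x, \PrX{i}) \le \OMtx{\calN_i}{}(o, x, \PrX{i}) \le \nu_i(o, x) = 0$ forces $\OMam{\calN_i}{}(o, a, x, \PrX{i}) = 0$, so $\OMam{\calN_i}{}$ is still fully determined. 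Hence equal $\OMpi{}$ and $\OMtx{}$ for two joint models (and all $i$) imply equal $\OMam{}$, which finishes the ``only if'' direction.

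The step requiring the most care is making the per-timestep factorization airtight, i.e.\ arguing that $p(\TmA{i}{t}\myeq a \mid \TmO{i}{t}\myeq o, \TmX{i}{t}\myeq x) = \pi_i(a\mid o, x)$ for every $t$ along the induced process irrespective of the rest of the system; this is precisely where the causal ordering within the AMM (observe, then update the subtask via $\Tx_i$, then act via $\pi_i$) is used. Everything afterward — pulling the time-invariant $\pi_i$ out of the discounted sum, the marginalization, and the measure-zero bookkeeping for $\nu_i = 0$ — is routine. The argument closely parallels \emph{Proposition 2.1} of \cite{seo2024idil} for the single-agent, fully observable setting, the only new ingredients being that the observation $o_i$ plays the role of the state and the remaining agents are absorbed into the environment.
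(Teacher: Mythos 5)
Your proof is correct, and both directions rest on the same ingredients the paper uses; the difference is in how the converse is closed. The forward direction (marginalizing out $x^-$ and $a$ from the joint measure) is identical to the paper's. For the converse, both arguments exploit the per-timestep factorization of the occupancy measure into a time-invariant conditional (e.g.\ $\rho_{\calN_i}(o,a,x,x^-)=\pi_i(a\mid o,x)\,\rho_{\calN_i}(o,x,x^-)$, and analogously with $\Tx_i$), but you then reconstruct the joint measure directly as a deterministic function of the two factored measures, $\rho_{\calN_i}(o,a,x,x^-)=\rho_{\calN_i}(o,a,x)\,\rho_{\calN_i}(o,x,x^-)/\nu_i(o,x)$, whereas the paper first recovers the policies themselves (concluding $\pi_i=\pi_i'$ and $\Tx_i=\Tx_i'$) and then invokes its Theorem~\ref{thm. bijection} to pass from equal agent models back to equal joint occupancy measures. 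Your route is slightly more self-contained, since it needs no appeal to the bijection theorem for this direction and works agent-by-agent; it is also more careful on the degenerate set, as your explicit bound handling $\nu_i(o,x)=0$ addresses a division that the paper's argument performs implicitly. The one step you flag as delicate -- that $p(a_i^t\mid o_i^t,x_i^t,x_i^{t-1})=\pi_i(a\mid o,x)$ along the induced process -- is indeed the crux, and it holds by the AMM's generative ordering exactly as you describe; the paper relies on the same identity without spelling it out.
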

% The proof is provided in the Appendix. This proposition enables us to derive the following equivalence.
% The proof is provided in the Appendix. Since $f$-divergence is always positive, this proposition implies that when other agents' models are assumed to be experts, the following equivalence can be derived:
\begin{proof}
(\textit{if Direction}) For each agent $i$, since $\rho_{\calN_i}(\oaxx{})=\rho_{\calN_i'}(\oaxx{})$, we can derive $\rho_{\calN_i}(\oax{})=\rho_{\calN_i'}(\oax{})$ and $\rho_{\calN_i}(\oxx{})=\rho_{\calN_i'}(\oxx{})$ by marginalizing out $\PrX{i}$ and $a_i$ from both sides, respectively. 

(\textit{if-only Direction}) From $\rho_{\calN_i}(\oxx{})=\rho_{\calN'_i}(\oxx{})$, we can derive $\rho_{\calN_i}(o, \PrX{})=\rho_{\calN'_i}(o, \PrX{})$ by marginalizing out $x$ on each side. Then, from the definition of the occupancy measure, $\rho_{\calN_i}(\oxx{})=\Tx_i(x|o, \PrX{})\rho_{\calN_i}(o, \PrX{})$ and $\rho_{\calN_i'}(\oxx{})=\Tx_i'(x|o, \PrX{})\rho_{\calN_i'}(o, \PrX{})$. Since $\rho_{\calN_i}(o, \PrX{})=\rho_{\calN'_i}(o, \PrX{})$ and $\rho_{\calN_i}(\oxx{})=\rho_{\calN'_i}(\oxx{})$, we can derieve $\Tx(\cdot) = \Tx'(\cdot)$. Similarly, we can drive $\rho_{\calN_i}(o, x)=\rho_{\calN'_i}(o, x)$ from $\rho_{\calN_i}(\oax{})=\rho_{\calN'_i}(\oax{})$ by marginalizing out $a$ on each side. Then, from $\rho_{\calN}(\oax{})=\pi(a|o, x)\rho_{\calN}(o, \PrX{})=\pi'(a|o, x)\rho_{\calN'}(o, x)=\rho_{\calN_i'}(\oax{})$, we can also derive $\pi(\cdot) = \pi'(\cdot)$. Since $\pi_i=\pi_i'$ and $\Tx_i = \Tx_i'$ for all agents $i=1:n$, by the one-to-one correspondence shown in Thm. \ref{thm. bijection}, $\rho_\calN(\oaxx{i}) = \rho_\calN(\oaxx{i})$ for all $i$.
\end{proof}

With Lemma \ref{thm. factorization}, we can prove the derivation of Eq. \ref{eq. actual objective}.
\begin{proof}
Since $f$-divergence is always positive and becomes zero when the two measures are the same, at the minimum, Eq. \ref{eq. original objective} will lead to $\rho_i(\oaxx{i})=\rho_E(\oaxx{i})$ for all $i$. Likewise, Eq. \ref{eq. actual objective} will make $\rho_i(\oax{i})=\rho_E(\oax{i})$ and $\rho_i(\oxx{i})=\rho_E(\oxx{i})$ for all $i$ at the minimum. Then, by the lemma \ref{thm. factorization}, Eq. \ref{eq. actual objective} also results in $\rho_i(\oaxx{i})=\rho_E(\oaxx{i})$ for all $i$. Due to the one-to-one correspondence between $\rho_{E}$ and $\calN_E$ derived from Thm. \ref{thm. bijection}, the solutions of both Eqs. \ref{eq. original objective} and \ref{eq. actual objective} become identical, $\calN_E=(\pi_E, \Tx_E)$.
\end{proof}

\subsection{Lemma \ref{thm. oax to oaxx}}

\begin{proof}
\newcommand{\rhoPikTx}[1]{\rho_{\pi^{#1}, \Tx^k}}
\newcommand{\rhoPiTxk}[1]{\rho_{\pi, \Tx^{#1}}}
\newcommand{\rhoPiTxNon}{\rho_{\pi,\Tx}}
\newcommand{\qPrX}[1]{p\left({\PrX{}}|\cdot, #1\right)}
\newcommand{\rhoEoaxx}[1]{\rho_E^k(#1)\qPrX{\pi^k, \Tx^k}}
\newcommand{\deltapi}{\delta_1}
\newcommand{\deltatx}{\delta_2}
    We can expand $\Dfpilemma{}$ as follows\footnote{Except where it is unclear, we omit the subscript $i$ from functions and variables here. Also, we often use ``$\cdot$'' instead of ``$\oax{}$'' or ``$\oaxx{}$'' for brevity.}:
    \begin{align}
        &\Dfpilemma{} = \sum_{\oax{}}\rho_E^k(\oax{}) f\left(\frac{\rhoPiTxk{}(\oax{})}{\rho_E^k(\oax{})} \right) \nonumber \\
        &= \sum_{\oaxx{}}\grvRho_E^k(\oaxx{}) f\left(\frac{\rhoPiTxk{}(\oax{})\qPrX{\pi^k, \Tx^k}}{\rhoEoaxx{\oax{}}} \right) \nonumber\\
        &= \sum_{\oaxx{}}\grvRho_E^k(\cdot) \left\{
        f\left(\frac{\rhoPiTxk{}(\cdot)\qPrX{\pi^k, \Tx^k}}{\rhoEoaxx{\cdot}}\right)
        - f\left(\frac{\rhoPiTxk{}(\cdot)\qPrX{\pi, \Tx}}{\rhoEoaxx{\cdot}}\right)
        \right. \nonumber\\
        &\hspace{17ex}\left.
        + f\left(\frac{\rhoPiTxk{}(\cdot)\qPrX{\pi, \Tx}}{\rhoEoaxx{\cdot}}\right)
        \right\}. \label{eq. df expansion}
    \end{align}
    Since the last term in Eq. \ref{eq. df expansion} is $\Dfamlemma{}$, we have the following equality:
    \begin{align}
        \Dfpi{i}{E}{}{k} =A + \Dfamlemma{}, \label{eq. df relationship}
    \end{align}
    where
    \begin{align}
        A &\doteq \sum_{\oaxx{}}\grvRho_E^k(\cdot) \left\{f\left(\frac{\rhoPiTxk{}(\cdot)\qPrX{\pi^k, \Tx^{k}}}{\rhoEoaxx{\cdot}}\right)
        - f\left(\frac{\rhoPiTxk{}(\cdot)\qPrX{\pi, \Tx}}{\rhoEoaxx{\cdot}}\right)\right\}.
    \end{align}
    Since $\pi_\PiParam$ is $K_1$-Lipschitz and $\Tx_\TxParam$ is $K_2$-Lipschitz, the following holds:
    \begin{align}
        &|\Delta (\pi, \pi^k)| = |\pi_\PiParam - \pi_{\PiParam^k}| \leq K_1 |\Delta(\PiParam, \PiParam^k)| = K_1 |\epsilon_1| \label{eq. pi lipschitz}\\
        &|\Delta (\Tx, \Tx^k)| = |\Tx_\TxParam - \Tx_{\TxParam^k}| \leq K_2 |\Delta(\TxParam, \TxParam^k)| = K_2 |\epsilon_2| \label{eq. tx lipschitz}
    \end{align}
    Let us define $\tilde{\pi}(a, x|o, \PrX{}) = \pi(a|o, x) \Tx(x|o, \PrX{})$. Then, we can derive the following relationships:
    \begin{align*}
        \Delta (\tilde{\pi}, \tilde{\pi}^k) &= |\tilde{\pi} -\tilde{\pi}^k| = |\pi\cdot \Tx - \pi^k \cdot \Tx^k| \\
        &= |\pi\cdot \Tx - \pi^k \cdot \Tx + \pi^k \cdot \Tx - \pi^k \cdot \Tx^k| \\
        &\leq |\pi\cdot \Tx - \pi^k \cdot \Tx| + |\pi^k \cdot \Tx - \pi^k \cdot \Tx^k| \\
        &\leq |\pi - \pi^k| + |\Tx - \Tx^k|  \qquad (\because \max |\pi| = 1 \text{ and } \max|\Tx|=1)\\
        &\leq K_1 |\Delta(\PiParam, \PiParamk)| + K_2|\Delta(\TxParam, \TxParamk)| \\
        &\leq \max(K_1, K_2)(|\epsilon_1| + |\epsilon_2|)
    \end{align*}
    Then, since $\delta \doteq |\Delta (\tilde{\pi}, \tilde{\pi}^k)|$ is sufficiently small by the lemma condition, we can derive the first-order approximation of $f$ at $\tilde{\pi}^k$. Then, the term $A$ can be expressed as follows:
    %  Taylor expansion of functional
    \begin{align}
        &A \approx  \sum_{\oaxx{}}\grvRho_E^k(\cdot) \left\{f'\left(\frac{\rhoPikTx{k}(\cdot)}{\rho_E^k(\cdot)}\right) \frac{1}{\rho_E^k(\cdot)} \frac{d}{d\tilde{\pi}}\rho_{\tilde{\pi}}(\cdot)\delta \right. \nonumber \\
        &\hspace{10ex}\left. - f'\left(\frac{\rhoPikTx{k}(\cdot)}{\rho_E^k(\cdot)}\right) 
        \frac{\qPrX{\pi^k, \Tx^k} \frac{d}{d\tilde{\pi}}\rho_{\tilde{\pi}}(\cdot) + \rhoPikTx{k}(\cdot) \frac{d}{d\tilde{\pi}}\qPrX{\tilde{\pi}}}{\rhoEoaxx{\cdot}} \delta
        \right\} \nonumber \\
        &= \sum_{\oaxx{}}
        -f'\left(\frac{\rhoPikTx{k}(\cdot)}{\rho_E^k(\cdot)}\right) \rhoPikTx{k}(\cdot) \frac{d}{d\tilde{\pi}}\qPrX{\tilde{\pi}} \delta\nonumber \\
        &=\frac{d}{d\tilde{\pi}}\sum_{\oaxx{}}
        -f'\left(\frac{\rhoPikTx{k}(\cdot)}{\rho_E^k(\cdot)}\right) \rhoPikTx{k}(\cdot)\qPrX{\tilde{\pi}} \delta \nonumber \\
        % The switching will be okay if the rule of sum applies to \pi (?) --> (most basically) no problem if it's a finite sum
        &=\frac{d}{d\tilde{\pi}}\sum_{\oaxx{}}
        -f'\left(\frac{\rhoPikTx{k}(\oax{})}{\rho_E^k(\oax{})}\right) \rhoPikTx{k}(\oax{})\postPrX{}{\tilde{\pi}} \delta \nonumber \\
        &=\frac{d}{d\tilde{\pi}}\sum_{\oax{}}
        -f'\left(\frac{\rhoPikTx{k}(\oax{})}{\rho_E^k(\oax{})}\right) \rhoPikTx{k}(\oax{})\delta \nonumber\\
        &=0 \label{eq. term A}
    \end{align}
    The last equality holds because the summation term does not depend on $\tilde{\pi}$. Since $A=0$ by Eq. \ref{eq. term A}, we can derive $\Dfpilemma{} = \Dfamlemma{}$ from Eq. \ref{eq. df relationship}.
\end{proof}

\subsection{Theorem \ref{thm. convergence}}
Before proving Thm. \ref{thm. convergence},  we first introduce the following relationship between the $\TXoccu$ occupancy measure matching and $\AMoccu$ occupancy measure matching problems:
\begin{lemma}[Lemma 2.3 of \citet{seo2024idil}]
\label{thm. oxx to oaxx}
    Define $|\Delta(\PiParam, \PiParam^k)|=\epsilon$. If $\epsilon$ is sufficiently small, then $\Dftx{\pi, \Tx}{E}{i}{k}=\Dfamchk{\pi, \Tx}{E}{i}{k}$.
\end{lemma}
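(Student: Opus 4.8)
The plan is to mirror the argument used for Lemma~\ref{thm. oax to oaxx}, interchanging the roles of the two policy levels: the variable $\PrX{i}$ that is marginalized out of the $\AMoccu$ occupancy to obtain the $\PIoccu$ occupancy is replaced by the action $a_i$ that is marginalized out to obtain the $\TXoccu$ occupancy, the smoothing posterior $p(\PrX{i}|\oax{i},\calN^k)$ is replaced by the low-level policy $\pi_i^k(a_i|o_i,x_i)$, and $\grvRho_E^k$ is replaced by $\chkRho_E^k$. The identity that makes this substitution exact is the factorization
\begin{align*}
    \rho_{\pi,\Tx}(\oaxx{i}) = \rho_{\pi,\Tx}(\oxx{i})\,\pi_i(a_i|o_i,x_i),
\end{align*}
which holds because in the AMM the action is emitted by $\pi_i(a_i|o_i,x_i)$ and is therefore conditionally independent of $\PrX{i}$ given $(o_i,x_i)$, so $p(a_i|\oxx{i},\calN_i)=\pi_i(a_i|o_i,x_i)$ \emph{exactly}. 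This is the key structural simplification over Lemma~\ref{thm. oax to oaxx}, where the analogous conditional is a full posterior depending on the entire model $(\pi,\Tx)$; here it depends on $\pi$ alone, which is precisely why the hypothesis need only constrain $\epsilon=|\Delta(\PiParam,\PiParam^k)|$ rather than a joint perturbation of both levels.

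First I would rewrite the $\TXoccu$ divergence, which is a sum over $(\oxx{i})$, as a sum over $(\oaxx{i})$ by multiplying the numerator and denominator of the $f$-argument by $\pi_i^k(a_i|o_i,x_i)$ and invoking the definition $\chkRho_E^k(\oaxx{i})=\OMtxEst{E}{i}{k}\,\pi_i^k(a_i|o_i,x_i)$; since the inserted factor cancels inside $f$ and $\sum_{a_i}\chkRho_E^k(\oaxx{i})=\OMtxEst{E}{i}{k}$, the value is unchanged. Next I would add and subtract the same summand with the \emph{current} low-level policy $\pi_i(a_i|o_i,x_i)$ in place of $\pi_i^k(a_i|o_i,x_i)$. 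By the factorization above, $\rho_{\pi,\Tx}(\oxx{i})\pi_i(a_i|o_i,x_i)=\rho_{\pi,\Tx}(\oaxx{i})$, so the added-back term is exactly $\Dfamchk{\pi,\Tx}{E}{i}{k}$, yielding
\begin{align*}
    \Dftx{\pi,\Tx}{E}{i}{k} = B + \Dfamchk{\pi,\Tx}{E}{i}{k},
\end{align*}
where $B$ is the residual difference of the two $f$-terms that share the same occupancy $\rho_{\pi,\Tx}(\oxx{i})$ and differ only in whether the conditional is evaluated at $\pi_i^k$ or at $\pi_i$.

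It then remains to show $B=0$. Since $\pi_\PiParam$ is Lipschitz in $\PiParam$ and $\epsilon$ is small, $|\pi_i-\pi_i^k|$ is small, so I would take the first-order Taylor expansion of both $f$-terms of $B$ about $\pi^k$, treating both the occupancy $\rho_{\pi,\Tx}(\oxx{i})$ and the conditional $\pi_i(a_i|o_i,x_i)$ as functions of $\pi_i$. Exactly as in Lemma~\ref{thm. oax to oaxx}, the two contributions proportional to the occupancy derivative $\tfrac{d}{d\pi_i}\rho_{\pi,\Tx}(\oxx{i})$ carry the same weight $f'(\cdot)/\OMtxEst{E}{i}{k}$ and cancel in the difference, leaving only the term carrying $\tfrac{d}{d\pi_i}\pi_i(a_i|o_i,x_i)$. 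The crucial observation is that $f'$ is evaluated at $\rho_{\pi^k,\Tx^k}(\oxx{i})/\OMtxEst{E}{i}{k}$, which depends on $(\oxx{i})$ but not on $a_i$; pulling $\tfrac{d}{d\pi_i}$ outside the sum and summing over $a_i$ with $\sum_{a_i}\pi_i(a_i|o_i,x_i)=1$ collapses the surviving term to $\sum_{\oxx{i}} -f'(\cdot)\,\rho_{\pi^k,\Tx^k}(\oxx{i})$, which is independent of $\pi_i$. Its derivative therefore vanishes, so $B=0$ and $\Dftx{\pi,\Tx}{E}{i}{k}=\Dfamchk{\pi,\Tx}{E}{i}{k}$.

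I expect the main obstacle to be the collapse step that forces $B=0$: one must keep the two occupancy-derivative contributions paired so their cancellation is manifest, and then verify that the single surviving term is genuinely $\pi_i$-independent once $a_i$ is summed out. This relies on both $f'$ and the $\TXoccu$ occupancy $\rho_{\pi^k,\Tx^k}(\oxx{i})$ being $a_i$-independent (so that $a_i$ enters only through the normalized conditional $\pi_i(\cdot|o_i,x_i)$), and on the smallness of $\epsilon$ to control the second-order Taylor remainder; the dependence of $\rho_{\pi,\Tx}(\oxx{i})$ on $\pi_i$ is harmless precisely because it cancels between the two $f$-terms before any derivative survives.
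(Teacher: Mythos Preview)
Your proposal is correct and is precisely the natural argument: the paper itself does not supply a proof here but defers to \cite{seo2024idil}, and the proof there is exactly the mirror of the Lemma~\ref{thm. oax to oaxx} argument you describe, with $a_i$ playing the role of $\PrX{i}$, the exact factorization $\rho_{\pi,\Tx}(\oaxx{i})=\rho_{\pi,\Tx}(\oxx{i})\,\pi_i(a_i|o_i,x_i)$ replacing the posterior identity, and $\chkRho_E^k$ replacing $\grvRho_E^k$. One small slip: since only $\PiParam$ is perturbed while $\Tx$ is held fixed at its arbitrary value, the base point of the Taylor expansion is $(\pi^k,\Tx)$, so the surviving term carries $\rho_{\pi^k,\Tx}(\oxx{i})$ rather than $\rho_{\pi^k,\Tx^k}(\oxx{i})$; this is immaterial to your collapse step, which only needs $a_i$-independence.
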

Since this lemma applies to multi-agent partially observable settings without modification, we refer to \cite{seo2024idil} for the proof. This lemma implies that under the given condition, the minimization of Eq. \ref{eq. tx objective} also minimizes the difference of the \AMoccu occupancy measures between the learner and the expert.

With the lemmas \ref{thm. oax to oaxx} and \ref{thm. oxx to oaxx}, we can derive a sufficient condition for decreasing our objective, Eq. \ref{eq. actual objective}.
\begin{corollary}
    If (1) the conditions of lemma \ref{thm. oax to oaxx} is satisfied and (2) $\chkRho_E^k\approx\grvRho_E^k$, minimizing Eqs. \ref{eq. pi objective} and \ref{eq. tx objective} decreases Eq. \ref{eq. actual objective}.
\end{corollary}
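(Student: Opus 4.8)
The plan is to collapse the factored objective of Eq. \ref{eq. actual objective} onto a \emph{single} $\AMoccu$ occupancy-matching divergence and then read the two M-step updates as successive coordinate-descent steps on it. Under conditions (1)--(2) I will show that, for every agent $i$, both summands of Eq. \ref{eq. actual objective} agree (up to the stated approximations) with $\Dfamgrv{\pi,\Tx}{E}{i}{k}$, so that Eq. \ref{eq. actual objective} equals $\approx 2\sum_i\Dfamgrv{\pi,\Tx}{E}{i}{k}$; monotone decrease is then inherited from the fact that neither update increases this common divergence.

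First I would invoke Lemma \ref{thm. oax to oaxx} on the low-level term and Lemma \ref{thm. oxx to oaxx} on the high-level term, both valid for any $(\pi_i,\Tx_i)$ in the small Lipschitz neighborhood of $(\pi_i^k,\Tx_i^k)$ supplied by condition (1):
\begin{align*}
\Dfpi{\pi,\Tx}{E}{i}{k} = \Dfamgrv{\pi,\Tx}{E}{i}{k}, \qquad \Dftx{\pi,\Tx}{E}{i}{k} = \Dfamchk{\pi,\Tx}{E}{i}{k}.
\end{align*}
Both right-hand sides are $f$-divergences of $\rho_{\calN_i}(\oaxx{i})$ against, respectively, $\grvRho_E^k$ and $\chkRho_E^k$; by condition (2) these targets are (approximately) identical, so $\Dfpi{\pi,\Tx}{E}{i}{k}\approx\Dftx{\pi,\Tx}{E}{i}{k}\approx\Dfamgrv{\pi,\Tx}{E}{i}{k}$, whence the $i$-th summand of Eq. \ref{eq. actual objective} is $\approx 2\,\Dfamgrv{\pi,\Tx}{E}{i}{k}$.

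Next I would chain the two M-step updates. Holding $\Tx_i=\Tx_i^k$, the step $\pi_i^k\to\pi_i^{k+1}$ decreases the objective of Eq. \ref{eq. pi objective}, namely $\Dfpi{\pi_i}{E}{i}{k}$, which by Lemma \ref{thm. oax to oaxx} equals $\Dfamgrv{\pi_i,\Tx_i^k}{E}{i}{k}$; since the step stays in the neighborhood, $\Dfamgrv{\pi_i^{k+1},\Tx_i^k}{E}{i}{k}\le\Dfamgrv{\pi_i^k,\Tx_i^k}{E}{i}{k}$. Then, holding $\pi_i=\pi_i^{k+1}$ (still inside the neighborhood, as $\epsilon_1$ is small), the step $\Tx_i^k\to\Tx_i^{k+1}$ decreases the objective of Eq. \ref{eq. tx objective}, namely $\Dftx{\Tx_i}{E}{i}{k}=\Dfamchk{\pi_i^{k+1},\Tx_i}{E}{i}{k}$, which by condition (2) is $\approx\Dfamgrv{\pi_i^{k+1},\Tx_i}{E}{i}{k}$, so $\Dfamgrv{\pi_i^{k+1},\Tx_i^{k+1}}{E}{i}{k}\le\Dfamgrv{\pi_i^{k+1},\Tx_i^k}{E}{i}{k}$. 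Composing the two inequalities gives $\Dfamgrv{\pi_i^{k+1},\Tx_i^{k+1}}{E}{i}{k}\le\Dfamgrv{\pi_i^k,\Tx_i^k}{E}{i}{k}$ for every $i$; substituting this into the $\approx 2\,\Dfamgrv{\pi,\Tx}{E}{i}{k}$ identity and summing over $i$ shows Eq. \ref{eq. actual objective} evaluated at $(\pi^{k+1},\Tx^{k+1})$ is at most its value at $(\pi^k,\Tx^k)$ — precisely the inequality Thm. \ref{thm. convergence} then feeds to $\Loss{k}=\sum_i\Dfam{i}{E}{i}{k}$.

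The main obstacle is keeping the two layers of approximation mutually consistent: the first-order/Lipschitz slack of Lemma \ref{thm. oax to oaxx} and the $\chkRho_E^k\approx\grvRho_E^k$ identification of condition (2) must be controlled together, and one must argue that the updated $(\pi_i^{k+1},\Tx_i^{k+1})$ still lies in the neighborhood where both lemmas remain valid, so that the rewriting survives into iteration $k+1$. This is exactly where the smallness and approximation hypotheses do the real work, and the conclusion should be read as holding up to these higher-order terms. A secondary, minor point is that $\Tx_i$ is updated against $\pi_i^{k+1}$ rather than $\pi_i^k$, which only enters through the Lipschitz closeness of the two and is absorbed into the same neighborhood argument.
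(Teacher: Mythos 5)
Your proposal is correct and follows essentially the same route as the paper: both invoke Lemma \ref{thm. oax to oaxx} and Lemma \ref{thm. oxx to oaxx} to rewrite the two summands as $f$-divergences against $\grvRho_E^k$ and $\chkRho_E^k$, use condition (2) to identify these targets and collapse Eq.~\ref{eq. actual objective} to $2\Dfamgrv{\pi,\Tx}{E}{i}{k}$, and then observe that each M-step update decreases this common quantity. Your explicit coordinate-descent chaining of the $\pi$- and $\Tx$-updates, and the remark that the $\Tx$-update is taken against $\pi^{k+1}$, merely spell out details the paper leaves implicit.
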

\begin{proof}
    Under the given conditions, we can rewrite Eq. \ref{eq. actual objective} as follows:
    \begin{align*}
        &\Dfpi{i}{E}{i}{k} +  \Dftx{i}{E}{i}{k} \\
        &= 2\Dfamlemma{} \hspace{10ex} (\because \grvRho_E^k \approx \chkRho_E^k)
    \end{align*}
    Thus, the $\PiParam$-update via Eq. \ref{eq. pi objective} and the $\TxParam$-update via Eq. \ref{eq. tx objective} both decrease $\Loss{k}_i$ by the lemmas \ref{thm. oax to oaxx} and \ref{thm. oxx to oaxx}, respectively.
\end{proof}

\paragraph{Proof of Theorem \ref{thm. convergence}}

\begin{proof}
    Let ${L'}_i^k$ denote the $f$-divergence between the estimate of the expert occupancy measure and the estimate of the expert model right before the subsequent E-step, i.e.,
    \begin{align*}
        {L'}_i^k = \Df{\rho_{\pi^k,\Tx^k}}{\rho_{E}^{k-1}}{\oaxx{i}} .
    \end{align*}
    Then, we can derive the following inequalities regarding the loss updates:
    \begin{align*}
         &{L'}^k = \sum_i  {L'}_i^k \\
         &=\sum_i \left(\Df{\rho_{\pi^k,\Tx^k}}{\rho_{E}^{k-1}}{\oaxx{i}}\right)\\
         &=\sum_i \left( 
            \sum_{\oaxx{i}} p(x_i, \PrX{i}|o_i, a_i, \JoAM^{k-1})p_E(o_i, a_i) f \left( \frac{\rho_{\calN_i^k}(\oaxx{i})}{p(x_i, \PrX{i}|o_i, a_i, \JoAM^{k-1})p_E(o_i, a_i)} \right) \right) \\
         &\geq \sum_i \left( 
            \sum_{\obs_i, a_i} p_E(o_i, a_i) f \left( \frac{\rho_{\calN_i^k}(\obs_i, a_i)}{p_E(\obs_i, a_i)} \right)   \right)~\qquad ~(\because \text{\textit{f} is convex})\\
        &=\sum_i \left( 
            \sum_{\oaxx{i}} p(x_i, \PrX{i}|o_i, a_i, \JoAM^{k})p_E(o_i, a_i) f \left( \frac{\rho_{\calN_i^k}(\oaxx{i})}{p(x_i, \PrX{i}|o_i, a_i, \JoAM^{k})p_E(o_i, a_i)} \right)  \right)\quad (\because \text{E-step})\\
         &=\sum_i \left(\Df{\rho_{\pi^k,\Tx^k}}{\rho_{E}^{k}}{\oaxx{i}} \right)~~ (:= L^k)\\
         &\geq \sum_i \left(\Df{\rho_{\pi^{k+1},\Tx^{k+1}}}{\rho_{E}^{k}}{\oaxx{i}}\right) \\
         &\hspace{9cm}(\because \text{M-step with the condition (1)}) \\
         &={L'}^{k+1}
    \end{align*}
    Since ${L'}^k \geq {L}^k$ and ${L}^k \geq {L'}^{k+1}$, $L^{k} \geq L^{k=1}$.
\end{proof}
% In the experiments, we empirically demonstrate that \ouralg can learn agent models that match experts' behavior with small supervision in partially observable multi-agent scenarios.

% \seo{
% \subsection{IQ-Learn Adaptation}
% }

\subsection{E-step: Derivations of the MAP Estimation}
In E-step, we infer the values of unknown subtasks using the following MAP estimation:
\begin{align*}
\argmax_{\TmX{i}{0:h}} p(\TmX{i}{0:h}|\TmOA{}{0:h}, \PaAM{}{k}) &=\argmax_{\TmX{i}{0:h}} p(\TmX{i}{0:h}, \TmOA{}{0:h}| \PaAM{}{k}) \\
&= \argmax_{\TmX{i}{0:h}}\log p(\TmX{i}{0:h}, \TmOA{}{0:h}| \PaAM{}{k}) 
\end{align*}
This can be effectively computed using the following dynamic programming approach:
\begin{align*}
\alpha_t(x) &\doteq \max_{\TmX{i}{0:t-1}}\log p(x_i^t\myeq x, \TmX{i}{0:t-1}, \TmOA{}{0:t}| \PaAM{}{k})  + c_t\\
&=\max_{y}\left( \log \pi_i^k(a_i^t|\obs_i^t, x_i^t\myeq x) + \log \Tx_i^k(x_i^t\myeq x|\obs_i^t, x_i^{t-1}\myeq y) + \alpha_{t-1}(y)\right)\\
\alpha_0(x) &= \log \pi_i^k(a_i^0|\obs_i^0, x_i^t=x) + \log\Tx_i^k(x_i^0=x|\obs_i^0, \#)
\end{align*}
where $c_t$ is a constant that does not depend on $x$ and can be ignored for this MAP estimation.
Since the inference only depends on individual agent models, it can be decentralized. With a time complexity of $O(h|X_i|)$ for each agent $i$, the overall time complexity for $n$ agents is $O(nh|\bar{X}|^2)$ where $|\bar{X}|=\max_{i=1:n}|X_i|$.
\clearpage
\section{Experiment Details}
\subsection{Domain Descriptions}
\label{sec. domain descriptions }

\begin{wrapfigure}{r}{0.27\textwidth}
  \centering
  \includegraphics[width=0.27\textwidth]{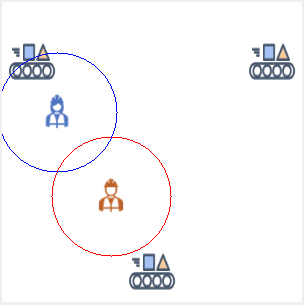}
  \caption{\simplemulti-$3$}
  \label{fig: mj3}
\end{wrapfigure}
\paragraph{\simplemulti-$n$} This domain gamifies the motivating example introduced in Fig. \ref{fig: workplace}. A team of two members must complete their job at $n$ designated locations. Each location has two different jobs for each agent, and an agent can complete only one type of job. If both agents are at the same location, they are distracted and none of them complete the job. A maximum of five jobs of each type can be stacked up per location, and agents can accomplish only one at a time. Once all five jobs are completed, the location is temporarily empty with that job type. However, after 15 timesteps, another five jobs regenerate at the location. While the world size is 10-by-10, agents can only observe within a radius of 2 (the red and blue circle around each agent in Fig. \ref{fig: mj3}). 
Additionally, they cannot know if a location has the jobs they are looking for unless they are at the location (distance threshold $< 0.5$). 
Both the observation and action spaces are continuous. An agent's observation consists of the following information: $(x, y, b_{job}, b_{tm},\Delta x_{tm}, \Delta y_{tm})$ where $(x, y)$ is the position of the agent, $b_{job}$ is a binary value indicating whether the agent is engaged in a job, $b_{tm}$ is a binary value indicating whether the teammate is observed, and $(\Delta x_{tm}, \Delta y_{tm})$ is the relative position of the teammate with respect to the agent. If the teammate is not within the observable distance, $b_{tm}, \Delta x_{tm}$ and $\Delta y_{tm}$ are all set to 0. We create synthetic agents that exhibit multimodal hierarchical behavior and generate 50 and 100 demonstrations for training and testing, respectively.

\paragraph{\movers and \rescue} These domains are employed from \cite{seo2023automated}. The goal of \movers is to move boxes to the truck with a team of two.  Since one agent cannot move boxes alone, explicit coordination between two agents is required. In \rescue, two agents are tasked with rescuing all the victims at four different disaster sites. While in City Hall and Campsite, the victims can be rescued by one agent, both agents must work together in bridges. In both domains, agents can observe only their vicinity (the unshaded area in Figs. \ref{fig: movers} and \ref{fig: flood}), forcing them to estimate their next location for effective collaboration. We assume an agent selects one of four locations (three boxes and the truck in \movers, and four disaster sites in \rescue) as their subtasks. In \movers, an agent's observation includes the following:  $(x, y, a, b_{tm},\Delta x_{tm},\Delta y_{tm}, a_{tm}, box_1, box_2, box_3)$ where $(x, y)$ represents the agent's position, $a$ is the agent's previous action, $b_{tm}$ is a binary indicating whether the teammate is observed, $(\Delta x_{tm}, \Delta y_{tm})$ represents the relative position of the teammate with respect to the agent, and $box_i$ represents the state of the box $i$. Each $box_i$ can take one of four values: ``Not observed'', ``At the original location'', ``Being carried'', and ``At the goal''. Since this domain is discrete, $x, y, a,  \Delta x_{tm}, \Delta y_{tm}, a_{tm}$ and $box_i$s are all one-hot encoded, respectively, resulting in the observation dimension of 45. When the teammate is not observed, $\Delta x_{tm}, \Delta y_{tm}$ and $a_{tm}$ are also set to 0 in addition to $b_{tm}$. In \rescue, an observation is represented by a 56-dimension vector: $(loc, a, b_{tm}, loc_{tm}, a_{tm}, w_1, w_2, w_3, w_4)$ where $loc$ is one of 32 positions on the map where the agent is currently located, $a$ is the agent's previous action, $b_{tm}$ is a binary indicating whether the teammate is observed, $loc_{tm}$ is the location of the teammate, $a_{tm}$ is the teammate's previous action, and $w_i$ is a binary indicating the status of the rescue work at each disaster site $i$. $loc_{tm}$ can take either "the same location as the agent" or one of six landmarks (``City Hall'', ``Fire Station'', ``Upper Bridge'', etc.). Similar to \movers, all variables are one-hot encoded, except that $loc_{tm}$ and $a_{tm}$ are set to all-zero when the teammate is not observed.
We use 50 and 100 demonstrations for training and testing per each domain.

% \paragraph{\rescue} In this domain, two agents are tasked with rescuing all the victims at four different disaster sites. 

\paragraph{\smactwo suite} \smactwo is a challenging benchmark for multi-agent reinforcement learning \cite{ellis2023smacv2}. We consider two domains in this suite: \protoss and \terran. The goal in these domains is to defeat an enemy team of five units(agents) by controlling five agents as a team. Each domain consists of different types of agents, and a team composition can differ between trials as they are randomly generated from a distribution.
We train a multi-agent policy via multi-agent reinforcement learning and generate 50 trajectories per domain for training. While the demonstrations do not include any diverse subtask-driven behavior, we set the number of subtasks as three for all agents to see the benefit of including latent states in the agent model.

\subsection{Expert Models}
We describe here the behavior of an expert team that has been used to generate demonstrations.
\paragraph{\simplemulti-$n$} We handcrafted subtask-driven expert behavior according to common sense rules. The set of subtasks consists of the locations designated for performing jobs (conveyor belts in Fig. \ref{fig: mj3}). Given a location as subtask, the action-level policy is implemented to move towards the location with some random noise. If the agent discovers that another agent is already at the targeted location, it waits around the target, maintaining some distance so as not to distract the other agent. Meanwhile, the subtask transition is modeled as follows: If the agent does not observe another agent and is not at the intended location, it maintains its subtask. If the agent does not gain a reward for conducting its job (due to no remaining jobs) at the intended location, it randomly selects one of the other locations as its new target. If the agent observes another agent, it randomly picks one of all locations as its new target. The agent maintains its subtask if it picks the same location it originally targeted. At the start of the task, each agent randomly selects one location as its subtask.

\paragraph{\movers and \rescue} We use value iteration to obtain experts' action-level policies. For the value iteration, we set a large positive reward for the states where an agent is at the intended location and a small negative reward for every other state as a penalty. The subtask transitions of each expert member are manually specified to follow the descriptions provided by \cite{seo2023automated}.

\paragraph{\smactwo suite} We use MAPPO \cite{yu2022the} to train a multi-agent policy, which is available at \url{https://github.com/marlbenchmark/on-policy}. We used the default set of hyperparameters and trained MAPPO with 20M exploration steps. It took a centralized approach, which shared policy networks across all agents.

\section{Implementation Details}
We use Python 3.8 to implement domains and algorithms and PyTorch 2.0.0 to build deep neural network models.

\subsection{Baseline Implementation}
We utilize official or popular source code as much as possible to implement reliable baselines. For Behavior Cloning, we use a version available at \url{https://github.com/HumanCompatibleAI/imitation} \cite{gleave2022imitation}. For \iql \cite{garg2021iq}, we use the official implementation provided by the authors at \url{https://github.com/Div99/IQ-Learn}. Note that while we do not directly use it as our baseline, it is essential for implementing both \iiqlsh and \ouralg. The implementation of \magail and \maogail are based on \ogail implementation that can be found at \url{https://github.com/id9502/Option-GAIL}. For \magail, we also make its critics dependent on other agents' information, following the original paper \cite{song2018multi}. 

For all baselines, we use a multi-layer perception (MLP) with two hidden 128-node layers for the actors and critics. For discriminators, we use MLP with two hidden layers of 256 nodes. The batch size has been standardized to 256.
For \movers and \rescue, we capped the maximum exploration steps at $100k$ for \iql-based approaches and $300k$ for GAIL-based approaches. For other domains, these limits were set to $200k$ and $500k$, respectively. BC was trained using $10k$ updates across all domains.

\subsection{\btil Adaptation}
Since \btil is limited to discrete observation and action spaces and cannot scale up to large domains, it is infeasible to be applied to \simplemulti-suite and \smactwo-suite. It is also impossible to apply \btil to   \movers and \rescue if we want to directly use the current observation representation, which is a concatenation of elements in subspaces. However, because the sets of observations in these domains are finite, we apply \btil by numbering each observation from 1 to $|\Omega_i|$. In this case, the number of possible observations for agent $i$ will be $\approx 1.0 \cdot 10^6$ for \movers and $\approx 1.3 \cdot 10^5$ for \rescue. For \btil, we use the implementation available at  \url{https://github.com/unhelkarlab/BTIL}\cite{seo2022semi}.

\subsection{Hyperparameters}
We grid-search the hyperparameter space to find the optimal one. In all domains, we use `Relu' as the neural net activation function, the learning rate of $3e-4$ for critics, the learning rate of $1e-4$ for actors, and a single critic structure. For the \iql submodule, we use $0.01$ for the temperature parameters, `value' for IQ method losses, and $Chi$ for $f$-divergence function. We use the replay buffer size $5k$ for \movers and \rescue, and $30k$ for other domains.

\subsection{Additional Results}
\subsubsection{Learning Curve}
We plot the task performance according to the number of exploration steps in Fig. \ref{fig: plots}. While \iql-based approaches generally outperformed GAIL-based approaches, as shown in Figs. \ref{fig: mj2 plot}-\ref{fig: movers plot}, \ouralg can additionally improve its performance with semi-supervision. In \rescue, overfitting is observed due to the low complexity of the domain.

\begin{figure}[t]
\newcommand\gap{0.30}
\newcommand\subgap{0.98}
  \centering
  \begin{subfigure}[b]{\linewidth}
      \centering
      \includegraphics[width=\subgap\textwidth]{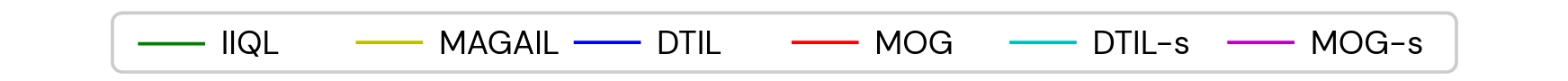}
  \end{subfigure}
  \begin{subfigure}[b]{\gap\linewidth}
      \centering
      \includegraphics[width=\subgap\textwidth]{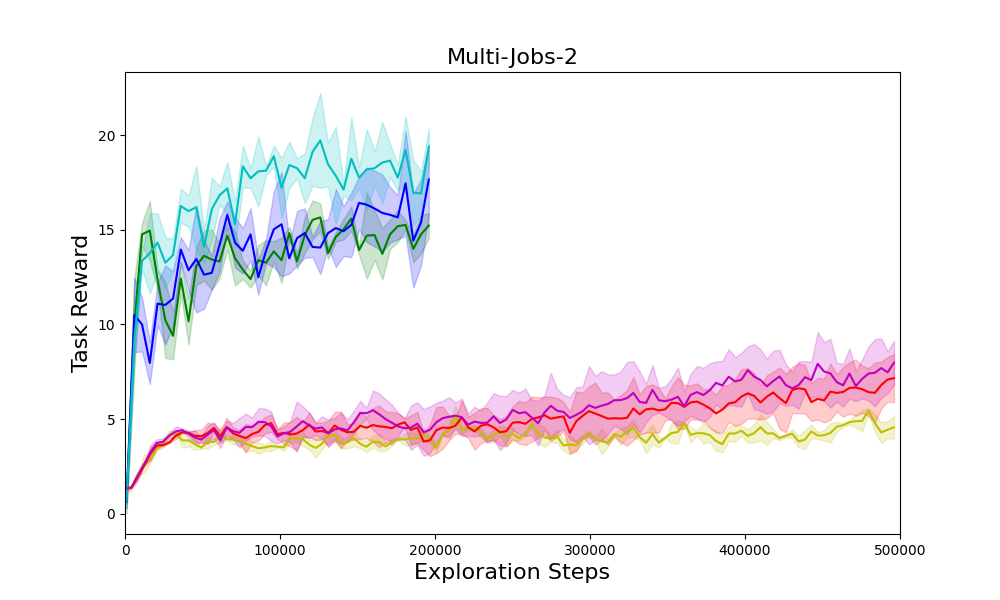}
      \caption{\simplemulti-$2$}
      \label{fig: mj2 plot}
  \end{subfigure}
  \hfill  
  \begin{subfigure}[b]{\gap\linewidth}
      \centering
      \includegraphics[width=\subgap\textwidth]{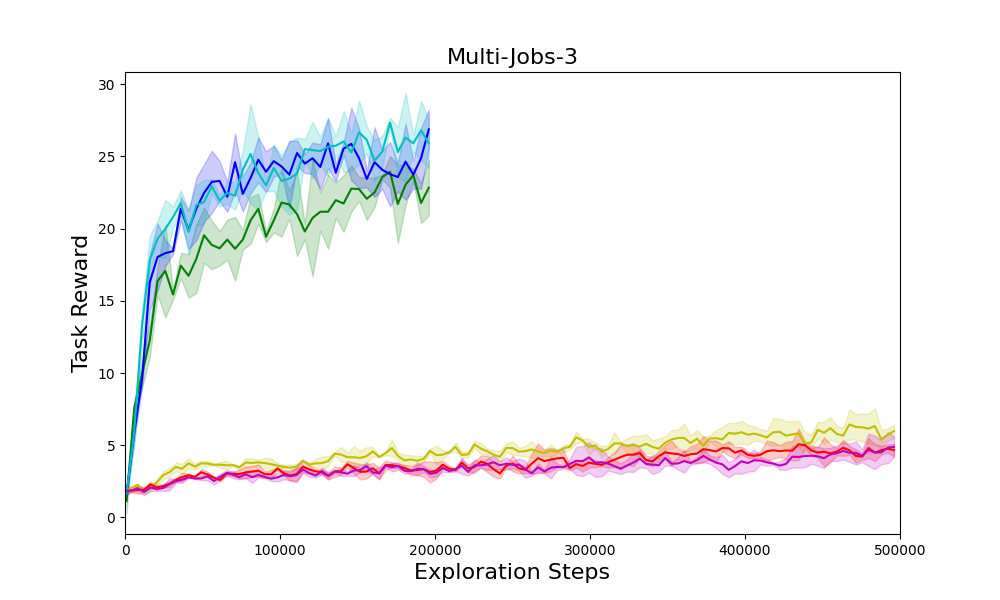}
      \caption{\simplemulti-$3$}
      \label{fig: mj3 plot}
  \end{subfigure}
  \hfill  
  \begin{subfigure}[b]{\gap\linewidth}
      \centering
      \includegraphics[width=\subgap\textwidth]{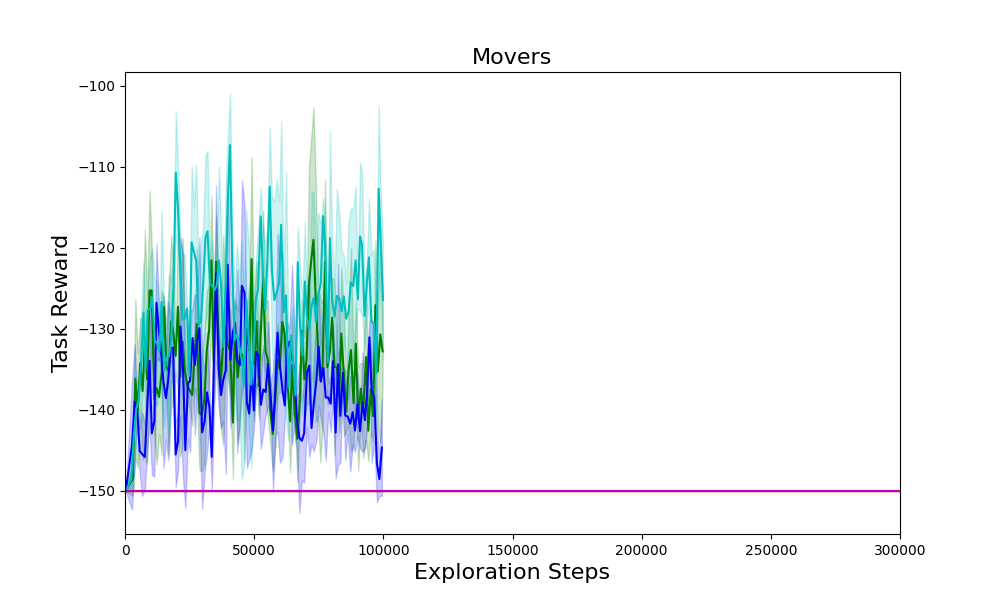}
      \caption{\movers}
      \label{fig: movers plot}
  \end{subfigure}
  \hfill  
  \begin{subfigure}[b]{\gap\linewidth}
      \centering
      \includegraphics[width=\subgap\textwidth]{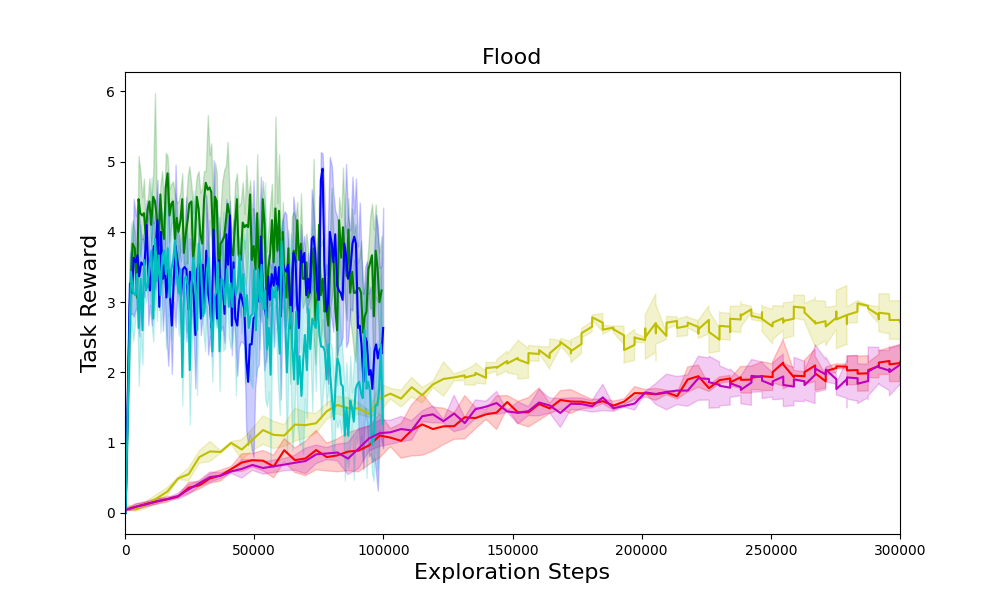}
      \caption{\rescue}
      \label{fig: flood plot}
  \end{subfigure}
  \hfill  
  \begin{subfigure}[b]{\gap\linewidth}
      \centering
      \includegraphics[width=\subgap\textwidth]{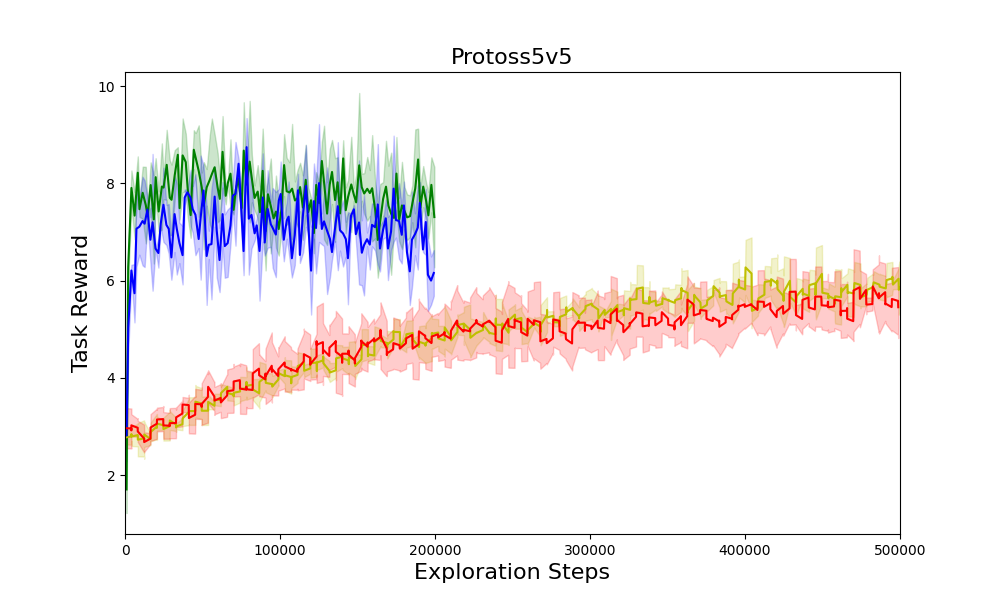}
      \caption{\protoss}
      \label{fig: protoss plot}
  \end{subfigure}
  \hfill  
  \begin{subfigure}[b]{\gap\linewidth}
      \centering
      \includegraphics[width=\subgap\textwidth]{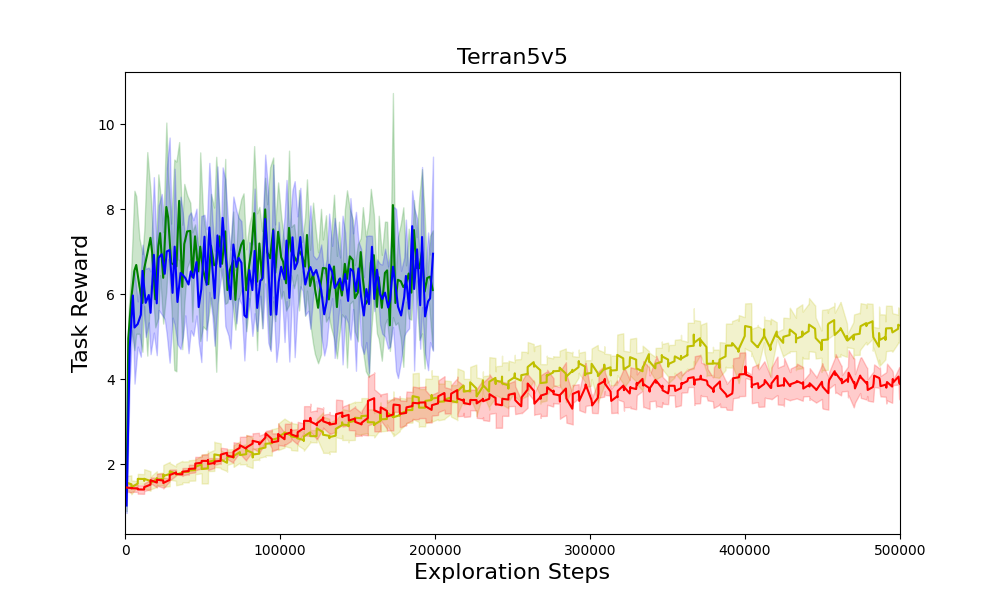}
      \caption{\terran}
      \label{fig: terran plot}
  \end{subfigure}

  \captionsetup{subrefformat=parens}
  \caption{The average task returns vs. the number of exploration steps. Each method is plotted with three seeds.}
  \label{fig: plots}
\end{figure}

\subsubsection{Behavior of team models in unsupervised learning settings.}

Without supervision, a hierarchical imitation learner may develop entirely different action policies that do not align with any subtask-driven behaviors of the expert. Nevertheless, even in such settings, our approach outperforms \maogail as a multi-agent hierarchical imitation learner. Since no clear metric exists for quantifying subtask-driven behavior quality, we qualitatively assess this advantage by visualizing the learned team models' paths for each $x$. Fig. \ref{fig: unsupervised paths} shows team models learned without subtask annotations. As illustrated, the behavior differences based on $x$ are more discernible in \ouralg than \maogail. Unlike \maogail, \ouralg more clearly exhibits subtask-driven behaviors, approaching different conveyor belts based on $x$. Notably, the learned $x$ values do not necessarily correspond to the expert's actual subtasks due to a lack of grounding. For instance, the behaviors for $x=2$ and $x=3$ are swapped relative to the expert's substask indices $2$ and $3$, as presented in Figure \ref{fig: expert a1 visualization}.

\begin{figure}[h]
  \def\subfwid{.48}
  \def\figwid{.3}
  \centering
  \begin{subfigure}[t]{\subfwid\linewidth}
      \centering
      \includegraphics[width=\figwid\linewidth, frame]{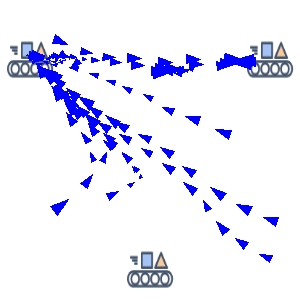}\hspace{0.5ex}
      \includegraphics[width=\figwid\linewidth, frame]{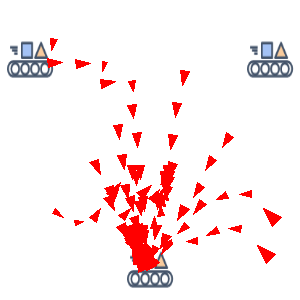}\hspace{0.5ex}
      \includegraphics[width=\figwid\linewidth, frame]{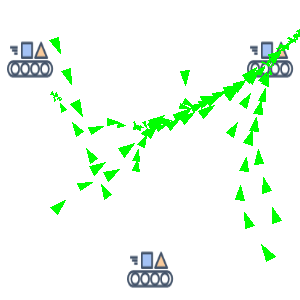}
      \caption{\ouralg (Agent 1)}
      \label{fig: unsupervised DTIL a1}
  \end{subfigure}
  \hspace{1ex}
  \begin{subfigure}[t]{\subfwid\linewidth}
      \centering
      \includegraphics[width=\figwid\linewidth, frame]{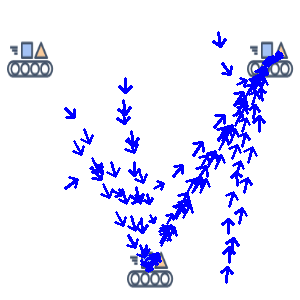}\hspace{0.5ex}
      \includegraphics[width=\figwid\linewidth, frame]{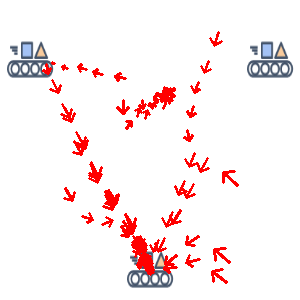}\hspace{0.5ex}
      \includegraphics[width=\figwid\linewidth, frame]{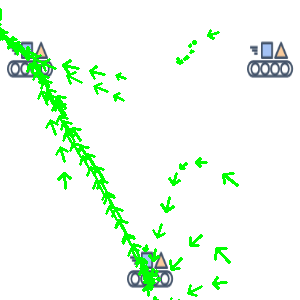}
      \caption{\ouralg (Agent 2)}
  \end{subfigure}
  \begin{subfigure}[t]{\subfwid\linewidth}
      \centering
      \includegraphics[width=\figwid\linewidth, frame]{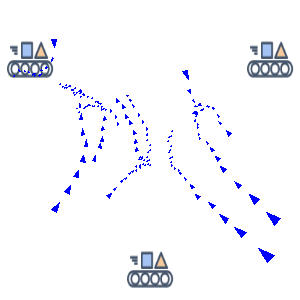}\hspace{0.5ex}
      \includegraphics[width=\figwid\linewidth, frame]{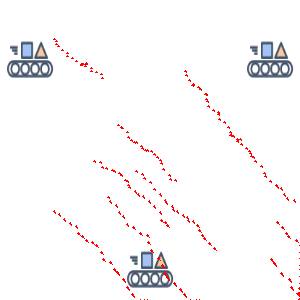}\hspace{0.5ex}
      \includegraphics[width=\figwid\linewidth, frame]{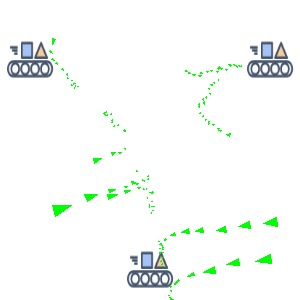}
      \caption{\maogail (Agent 1)}
  \end{subfigure}
  \hspace{1ex}
  \begin{subfigure}[t]{\subfwid\linewidth}
      \centering
      \includegraphics[width=\figwid\linewidth, frame]{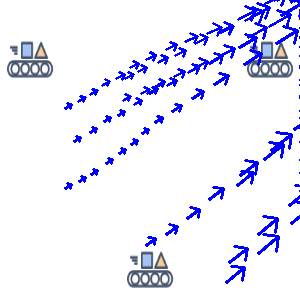}\hspace{0.5ex}
      \includegraphics[width=\figwid\linewidth, frame]{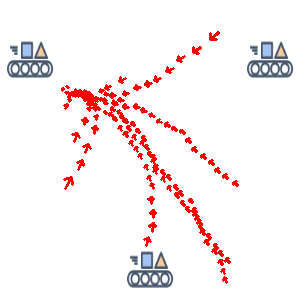}\hspace{0.5ex}
      \includegraphics[width=\figwid\linewidth, frame]{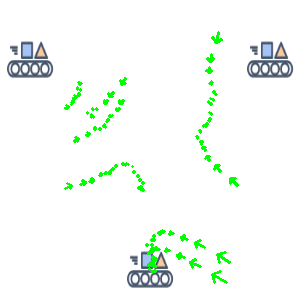}
      \caption{\maogail (Agent 2)}
  \end{subfigure}
  \captionsetup{subrefformat=parens}
  \caption{Individual \simplemulti-$3$ trajectories for each $x$ generated by team models learned without any subtask annotations.}
  \label{fig: unsupervised paths}
\end{figure}

\subsubsection{Visualization of diverse team behaviors learned with \ouralg.}

Figure \ref{fig: paths} illustrates the diverse team behaviors that can emerge in the same situation due to different subtask updates. To simulate team interactions, we manually set the agents' starting positions and assign their subtasks to ensure they cross paths at the center of the space. 
This scenario is simulated across five different seeds ($seed=0:4$). The dashed boxes in each image indicate instances where the two agents come close enough to observe each other.  As shown in Fig. \ref{fig: expert interaction paths}, the expert team exhibits diverse behaviors in this interactive setting. While \maogail agents produce only similar trajectories, \ouralg emulates diverse team trajectories, suggesting that it effectively learns subtask-driven agent models from heterogeneous multi-agent demonstrations.

\begin{figure}[h]
  \def\subfwid{.80}
  \def\figwid{.18}
  \centering
  \begin{subfigure}[t]{\subfwid\linewidth}
      \centering
      \includegraphics[width=\figwid\linewidth, frame]{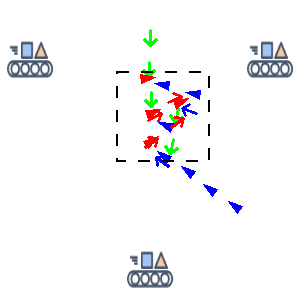}\hspace{0.5ex}
      \includegraphics[width=\figwid\linewidth, frame]{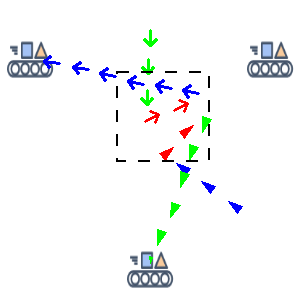}\hspace{0.5ex}
      \includegraphics[width=\figwid\linewidth, frame]{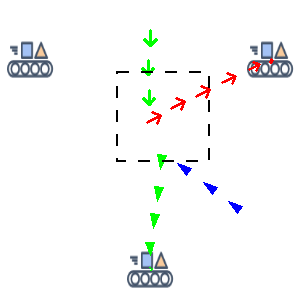}\hspace{0.5ex}
      \includegraphics[width=\figwid\linewidth, frame]{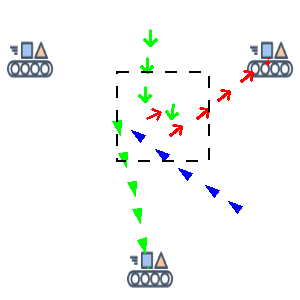}\hspace{0.5ex}
      \includegraphics[width=\figwid\linewidth, frame]{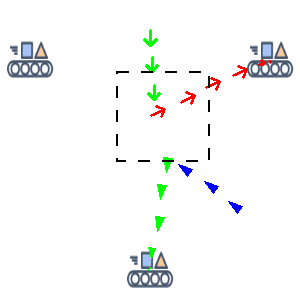}
      \caption{Expert}
      \label{fig: expert interaction paths}
  \end{subfigure}
  \begin{subfigure}[t]{\subfwid\linewidth}
      \centering
      \includegraphics[width=\figwid\linewidth, frame]{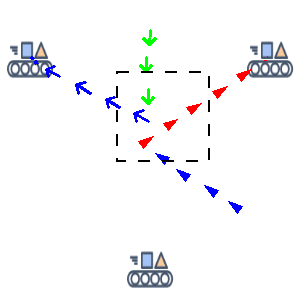}\hspace{0.5ex}
      \includegraphics[width=\figwid\linewidth, frame]{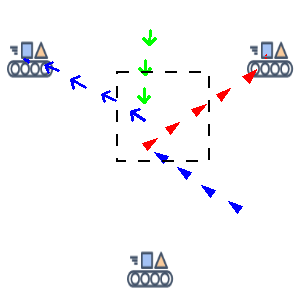}\hspace{0.5ex}
      \includegraphics[width=\figwid\linewidth, frame]{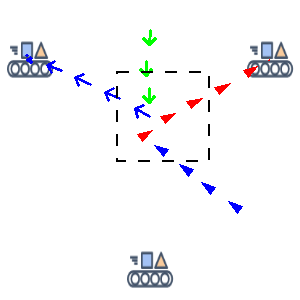}\hspace{0.5ex}
      \includegraphics[width=\figwid\linewidth, frame]{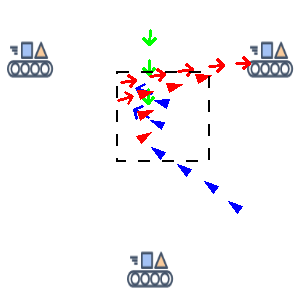}\hspace{0.5ex}
      \includegraphics[width=\figwid\linewidth, frame]{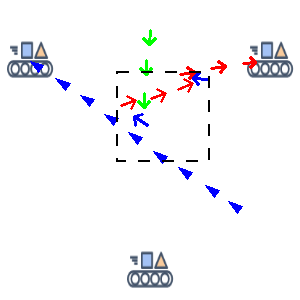}
      \caption{\ouralg}
  \end{subfigure}
  \begin{subfigure}[t]{\subfwid\linewidth}
      \centering
      \includegraphics[width=\figwid\linewidth, frame]{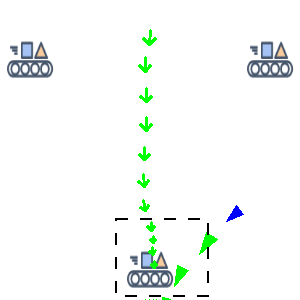}\hspace{0.5ex}
      \includegraphics[width=\figwid\linewidth, frame]{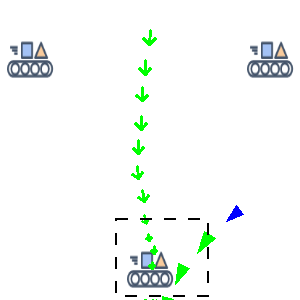}\hspace{0.5ex}
      \includegraphics[width=\figwid\linewidth, frame]{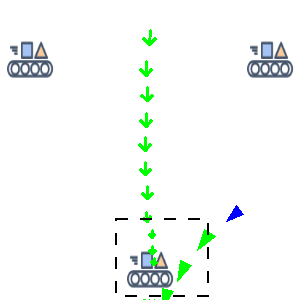}\hspace{0.5ex}
      \includegraphics[width=\figwid\linewidth, frame]{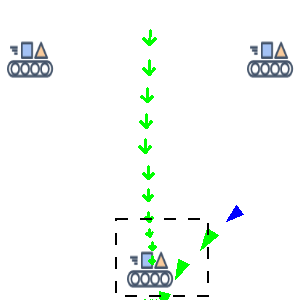}\hspace{0.5ex}
      \includegraphics[width=\figwid\linewidth, frame]{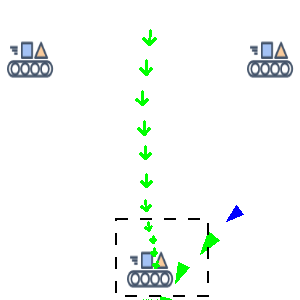}
      \caption{\maogail}
  \end{subfigure}
  \captionsetup{subrefformat=parens}
  \caption{Example trajectories of \simplemulti-$3$ trajectories generated by the expert and learned models (20 \% supervision). The triangles and arrows represent the actions of Agent 1 and Agent 2, respectively. The dashed boxes represent the locations where two agents observed each other. }% All models are trained with the semi-supervision of 0.2.}
  \label{fig: paths}
\end{figure}

\fi

\end{document}